\documentclass{article}

\PassOptionsToPackage{numbers, compress}{natbib}

\usepackage[preprint]{neurips_2026}

\usepackage[utf8]{inputenc} 
\usepackage[T1]{fontenc}    
\usepackage{hyperref}       
\usepackage{url}            
\usepackage{booktabs}       
\usepackage{amsfonts}       
\usepackage{nicefrac}       
\usepackage{microtype}      
\usepackage{xcolor}         
\usepackage{amsmath}
\usepackage{amssymb}
\usepackage{comment}
\usepackage{algorithm}
\usepackage{algorithmicx}
\usepackage{algpseudocode}
\usepackage{amsthm}
\usepackage{graphicx}
\usepackage{booktabs}
\usepackage{multirow}
\usepackage{placeins}
\title{RheoSampling: Resolving the One-Hot Dilemma in Stochastic Dynamic-Tree Speculative Decoding}

\author{
Qiao Hu$^{1}$\thanks{Equal Contribution.} \quad Yepeng Weng$^{2,3*}$\thanks{Corresponding author.} \quad Bo Zhang$^{4,5}$ \quad Takehisa Yairi$^{2}$\\
$^1$ National Center for Mathematics and Interdisciplinary Sciences (NCMIS), AMSS, CAS \\
$^2$ The University of Tokyo \quad $^3$Lenovo AI Technology Center \\ 
$^4$ SKLMS and AMSS, Chinese Academy of Sciences \\
$^5$ School of Mathematical Sciences, University of Chinese Academy of Sciences \\
\texttt{huqiao2020@amss.ac.cn, yweng@g.ecc.u-tokyo.ac.jp}}

\begin{document}

\maketitle

\begin{abstract}
Speculative decoding accelerates LLM inference by drafting multiple tokens in parallel, with tree-based methods further improving efficiency through structured hierarchies. Dynamic-tree methods such as EAGLE-3 achieve excellent performance under greedy decoding via deterministic top-$K$ expansion and global pruning. However, in stochastic decoding ($T>0$), this mechanism collapses the draft distribution into one-hot probabilities, causing a severe drop in acceptance rate. This exposes an apparent dilemma: dynamic-tree methods sacrifice stochastic sampling to preserve context-aware topology, while static-tree methods preserve stochastic sampling with context-agnostic structures. The issue arises because the same probability distribution is used for two conflicting tasks: constructing the tree and verifying the tokens. This coupling makes direct injection of randomness extremely challenging, as we are faced with a complex stochastic process.
We resolve this by decoupling these two roles: RheoSampling assigns a token sampled from the draft distribution a \textit{proxy probability} (for tree expansion and pruning) alongside its \textit{true sampling probability} (for verification). Specifically, we inject a sampled token among the deterministic top-$K$ slots and treat it with different probabilities in the construction and verification process, making RheoSampling the first dynamic-tree method with both context-aware top-$K$ construction and stochastic sampling while maintaining losslessness. We establish the lossless guarantee through a novel equivalence-class analysis that compresses the stochastic tree space into tractable classes. An OT-based verification strategy and a sparse draft mechanism ensure that theoretical gains translate into practical efficiency.
Experiments across diverse LLMs and benchmarks demonstrate consistent improvements in acceptance rate and speedup over state-of-the-art dynamic tree methods. This framework may provide a template for analyzing other complex stochastic tree structures.
\end{abstract}

\section{Introduction}

Large Language Models (LLMs)~\cite{GPT-4, GPT-5, llama3, qwen3} have demonstrated remarkable capabilities across diverse tasks, yet their autoregressive decoding nature incurs high inference latency. Speculative decoding \cite{LeviathanSpec, ChenSpec} mitigates this bottleneck by employing a lightweight draft model to predict multiple future tokens in parallel, which are then verified by the target model in a single forward pass. To further increase the probability of matching the target distribution per decoding step, recent works have shifted from chain decoding to tree-based speculative decoding \cite{SpecInfer, ChenSequoia, optree, Medusa, LiEAGLE, LiEAGLE2, li2025eagle3}, where multiple candidate token sequences are organized as a tree structure.

While initial tree methods employed static or fixed tree structures, state-of-the-art approaches such as EAGLE-2/3 \cite{LiEAGLE2, li2025eagle3} adopt dynamic tree construction, where the draft topology is contextually adapted at each step. These methods typically follow an \textit{expand-then-rerank} paradigm: they greedily expand the draft tree by selecting tokens with the highest path probabilities, then rerank and prune candidates to fit a global verification budget.
Notably, this entire pipeline is essentially \textit{deterministic}: once the draft distribution is computed, the tree topology and selected candidates are fully determined, with no randomness involved.
Under greedy decoding ($T=0$), this deterministic paradigm is highly effective. However, under stochastic decoding ($T>0$), this mechanism collapses the draft distribution into degenerate one-hot probabilities during verification,\footnote{We refer readers to the official EAGLE implementation, where the draft probability is treated as $1.0$ during verification.} causing the under-exploration of tail distribution and a severe acceptance-rate drop. This exposes an apparent dilemma: dynamic-tree methods sacrifice stochastic sampling to preserve context-aware topology, while static-tree methods preserve stochastic sampling with context-agnostic structures.

In this paper, we propose \textbf{RheoSampling} (Rheostat Sampling), a hybrid paradigm that resolves this challenge. The root difficulty is not merely introducing randomness, but dealing with a complex stochastic process: the tree topology and the sampled token identities are tightly coupled, yielding a probability space too vast for direct analysis. We overcome this through two synergistic innovations: a \textit{dual-identity} probability design that decouples tree construction from verification, and a novel equivalence-class analysis that compresses the dynamic tree space into tractable classes. Specifically, we insert a single stochastically sampled token into the top-$K$ candidate pool, immediately following the top-$m$ deterministic selections. Then we innovatively apply a \textit{dual treatment} to the sampled token: during dynamic tree construction, it is assigned a \textit{proxy probability} to compete with deterministic candidates for expansion and survival; during verification, it is evaluated using its \textit{true sampling probability}, guaranteeing losslessness. By decoupling the probability used to build the tree from the probability used to verify tokens, RheoSampling successfully injects stochasticity into the dynamic drafting process and materializes it in the final tree.

To further improve the acceptance rate of RheoSampling, we design a simple yet efficient verification algorithm based on Optimal Transport (OT) and prove its losslessness. Furthermore, we introduce a sparse draft distribution strategy to reduce computational overhead. Extensive experiments across diverse LLMs and benchmarks demonstrate that RheoSampling consistently outperforms top-$K$-based dynamic trees in both acceptance rate and wall-clock speedup in standard stochastic decoding.

Our contributions are summarized as follows:
\begin{itemize}
    \item \textbf{Dual-identity decoupling for the one-hot dilemma.} RheoSampling gives a stochastically sampled token two independent probability identities: a \textit{proxy probability} for tree construction and a \textit{true sampling probability} for verification. This decoupling resolves the conflict between dynamic top-$K$ construction and stochastic sampling, making RheoSampling the first method to achieve both with rigorous losslessness.
    \item \textbf{Equivalence-class analysis for losslessness.} We develop an equivalence-class methodology that reduces the stochastic tree space to a tractable form, yielding the first rigorous proof of losslessness for stochastic dynamic trees. This analytical framework may provide a template for analyzing other complex stochastic tree structures.
    \item \textbf{Efficient algorithmic design.} We design a verification algorithm for RheoSampling based on Optimal Transport, and introduce a sparse draft distribution that enables efficient sampling over large vocabularies, translating theoretical gains directly into wall-clock speedup.
    \item \textbf{Comprehensive empirical validation.} We conduct extensive experiments on various LLMs and tasks. The results show that RheoSampling consistently achieves superior acceptance rates and speedups over the EAGLE-3 baseline, validating its robustness and generality.
\end{itemize}

\section{Preliminaries}

\subsection{Tree-Based Speculative Decoding}

Tree-based speculative decoding organizes multiple candidate sequences into a tree structure $\mathcal{T}$, allowing the target model to verify diverse drafting paths in parallel. Early approaches employed static tree topologies constructed heuristically (\emph{e.g.,} SpecInfer \cite{SpecInfer}, EAGLE-1 \cite{LiEAGLE}). To further improve tree quality and hit rates, recent works have shifted to dynamic, context-aware construction (e.g., EAGLE-2/3 \cite{LiEAGLE2, li2025eagle3}), where the tree topology adapts to the context. \textit{Unless otherwise specified, we refer to EAGLE as the dynamic tree variant throughout this paper.}

EAGLE employs an \textit{expand-then-rerank} paradigm for dynamic tree construction. At each decoding step, the draft model performs parallel forward passes on $K$ selected parent nodes to obtain next-token distributions. For each parent, it selects the top-$K$ tokens to form a candidate pool, resulting in $K \times K$ leaf nodes per layer. The path score is then computed, defined as the cumulative probability along the path from root to leaf for each candidate. Based on these scores, another top-$K$ selection determines which nodes to expand in the next layer. This process continues until reaching depth $D$, after which the tree is pruned to satisfy a global budget $N$, keeping the nodes with the highest scores.

This mechanism inherently collapses the stochastic draft distribution into a deterministic selection. Specifically, because tokens are exclusively chosen via top-$K$ operations, the actual \textit{proposal distribution} degenerates into a set of point masses. Consequently, the rich long-tail probability information in $q$ is entirely discarded during tree expansion.

\begin{figure}[t]
\centering
\includegraphics[width=0.8\linewidth]{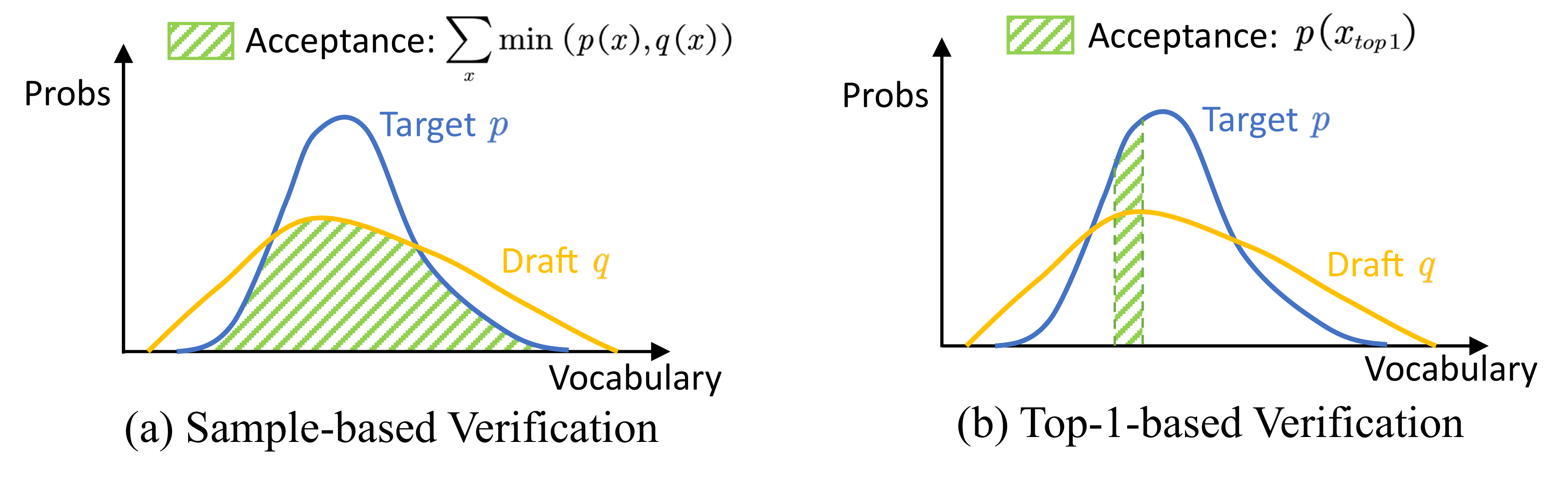}
\caption{Illustration of acceptance rates under different sampling strategies. (a) Sample-based verification: the acceptance rate equals the distribution overlap $\sum_x \min(p(x), q(x))$. (b) Top-$1$-based verification: the draft deterministically selects $x_{top1} = \arg\max_x q(x)$, yielding an acceptance rate of $p(x_{top1})$, which equals the probability mass the target model assigns to this single point.}
\label{fig:acceptance_comparison}
\end{figure}

\subsection{Sampling and Verification Strategies}

To intuitively illustrate the fundamental differences between sampling strategies, consider the single-draft scenario (see Figure~\ref{fig:acceptance_comparison}). 

For standard speculative sampling (\emph{i.e.,} rejection sampling) \cite{LeviathanSpec, ChenSpec}, the acceptance rate equals the overlap between the target distribution $p$ and the draft distribution $q$, formally $\alpha^* = \sum_x \min(p(x), q(x))$. In contrast, under top-$1$ sampling, the draft deterministically selects the token with the highest $q(x)$, and the acceptance rate reduces to the target probability assigned to this single candidate, which is typically much smaller than the total variation overlap when temperature is high. 

These considerations also exist in multi-draft scenarios. In fact, pure top-$K$ sampling suffers from an intrinsic limitation. Even when the draft model perfectly matches the target, top-$K$ sampling cannot achieve $100\%$ acceptance rate, violating the \textit{optimal transport property}, which is discussed in Sequoia \cite{ChenSequoia}. Modern multi-draft strategies such as recursive rejection sampling \cite{SpecInfer,YangMCSD,LiEAGLE,JeonRSD} and OT-based \cite{SpecTr,SpecHub,TowardsOptimal,thomas2026global} methods rely on the presence of \textit{at least one stochastically sampled token} within the candidate set. In brief, even a single stochastic token in the candidate pool provides the distributional flexibility for advanced verification algorithms; on the contrary, pure top-$K$ selection inherently lacks this operational space, precluding any non-trivial allocation strategy.

\section{Method}
\label{sec:method}

\subsection{The Incompatibility of Stochastic Sampling and Dynamic Tree Construction}\label{sec:incompatibility}

Replacing top-$K$ with direct sampling seems like the natural way to fix the one-hot collapse. However, this naive substitution fails structurally, because it destroys the foundation that dynamic tree construction relies upon.

\paragraph{Ranking by draft probability and its limits.} 
Dynamic tree methods such as EAGLE-2/3 use draft probability $q(x)$ to rank tokens. Tokens with higher $q(x)$ are selected as parent nodes for expansion, and path scores are computed by multiplying $q(x)$ along the path. Under greedy decoding ($T=0$), this works well because the draft model's top choice is usually aligned with the target one.
However, under non-greedy sampling ($T>0$), whether a token is accepted depends on the ratio $p(x)/q(x)$, not on the size of $q(x)$ itself. A token with small $q(x)$ can still be accepted if $p(x)$ is large enough, and a high $q(x)$ does not necessarily guarantee a high acceptance rate. Thus, $q(x)$ is no longer a reliable indicator of acceptance. If we directly replace top-$K$ selection with stochastic sampling and use $q(x)$ to rank the sampled tokens, the rules for expanding and pruning the tree become unclear.

\paragraph{The deeper trap: breaking losslessness.}
A more essential failure arises when sampled tokens are ranked by their raw probabilities. Because a token's value determines its own rank, its survival through pruning depends on its identity. Conditioning on survival distorts the token's conditional distribution away from the original sampling distribution. Once the verification probability no longer matches this distorted distribution, the lossless property is broken.

\subsection{RheoSampling: Dual-Identity Decoupling for Dynamic Trees}

The naive approach fails because it ties a token's survival in the tree to its own sampling probability. This coupling distorts the conditional distribution of surviving tokens, breaking losslessness. To resolve this, we propose \textbf{RheoSampling} decouples these two roles. Its core idea is to assign \textit{two independent probability identities to a single stochastic token}: a proxy probability for tree construction (expansion, reranking and pruning), and its true sampling probability for verification.

\paragraph{Hybrid sampling mechanism.} 
Consider a candidate pool with $K$ slots at each expansion step. RheoSampling allocates these slots as follows: first, deterministically select the top-$m$ tokens with highest draft probabilities (the lead tokens); second, sample \textit{one} representative token $x_s$ from the residual distribution $\tilde{q}$ (the tail beyond top-$m$); finally, fill the remaining $K-m-1$ slots with the highest-ranked tokens from the remaining vocabulary (the fill tokens). 
This yields a mixed candidate set combining high-probability tokens with a single stochastic probe into the distribution tail.

\paragraph{Dual treatment of the sampled token.} 
The critical innovation lies in how $x_s$ is treated. During tree construction, it carries a \textbf{proxy probability} $q_{\text{proxy}}(x_s) = \min\{q(x_m), z\}$, where $z = 1 - \sum_{i=1}^m q(x_i)$ is the residual mass and $q(x_m)$ is the $m$-th deterministic probability. This construction places $x_s$ ahead of all fill tokens within the candidate pool, ensuring its survival through global pruning is \textit{independent of its realized identity}. This is an essential property for preserving the losslessness (see Section \ref{sec:theo}). In particular, when $m=0$, we set the proxy probability to $q(x_1) + \epsilon$ (with $\epsilon > 0$), anchoring the sampled token at the first slot. During verification, the token is evaluated using its \textbf{true sampling probability} $\tilde{q}(x_s)$, not the proxy.
This dual treatment ensures the tree is built on controlled, principled estimates that preserve structural quality, while verification remains lossless by respecting the actual sampling distribution.

\paragraph{The rheostat parameter.}
By varying $m$, we control the proxy probability assigned to $x_s$ and the number of deterministic lead tokens preceding it. A smaller $m$ assigns a larger proxy, increasing the sampled token's survival probability and ensuring stochastic exploration materializes in the final tree, but reduces the deterministic backbone. Conversely, a larger $m$ preserves high-quality tree topology yet risks pruning the sampled node, as its proxy weight places it at a lower rank. Thus, $m$ acts as a rheostat, balancing stochastic survival against structural quality.

\paragraph{Efficient implementation via sparse distributions.}
To mitigate computational overhead from sampling over large vocabularies, we employ a sparse draft distribution strategy. We truncate logits to the top-128 entries prior to softmax, setting remaining logits to $-\infty$. This yields a sparse distribution that enables efficient sampling without iterating over the full vocabulary. Losslessness is preserved as long as verification employs the \textit{same} truncated distribution used for drafting, and acceptance rate is maintained because top-128 entries capture most probability mass of the draft distribution (see Appendix \ref{appendix:sparse} for more details).

\subsection{RheoVerification: Better Acceptance via Optimal Transport}
\label{sec:verification}
We develop an associated verification mechanism based on Optimal Transport (OT) for RheoSampling to further improve the acceptance rate, as shown in Algorithm~\ref{alg:verification}. We provide rigorous proof of its losslessness and superiority over vanilla sequential rejection verification in Section \ref{sec:theo}.

\begin{figure}[h]
\centering
\includegraphics[width=0.9\linewidth]{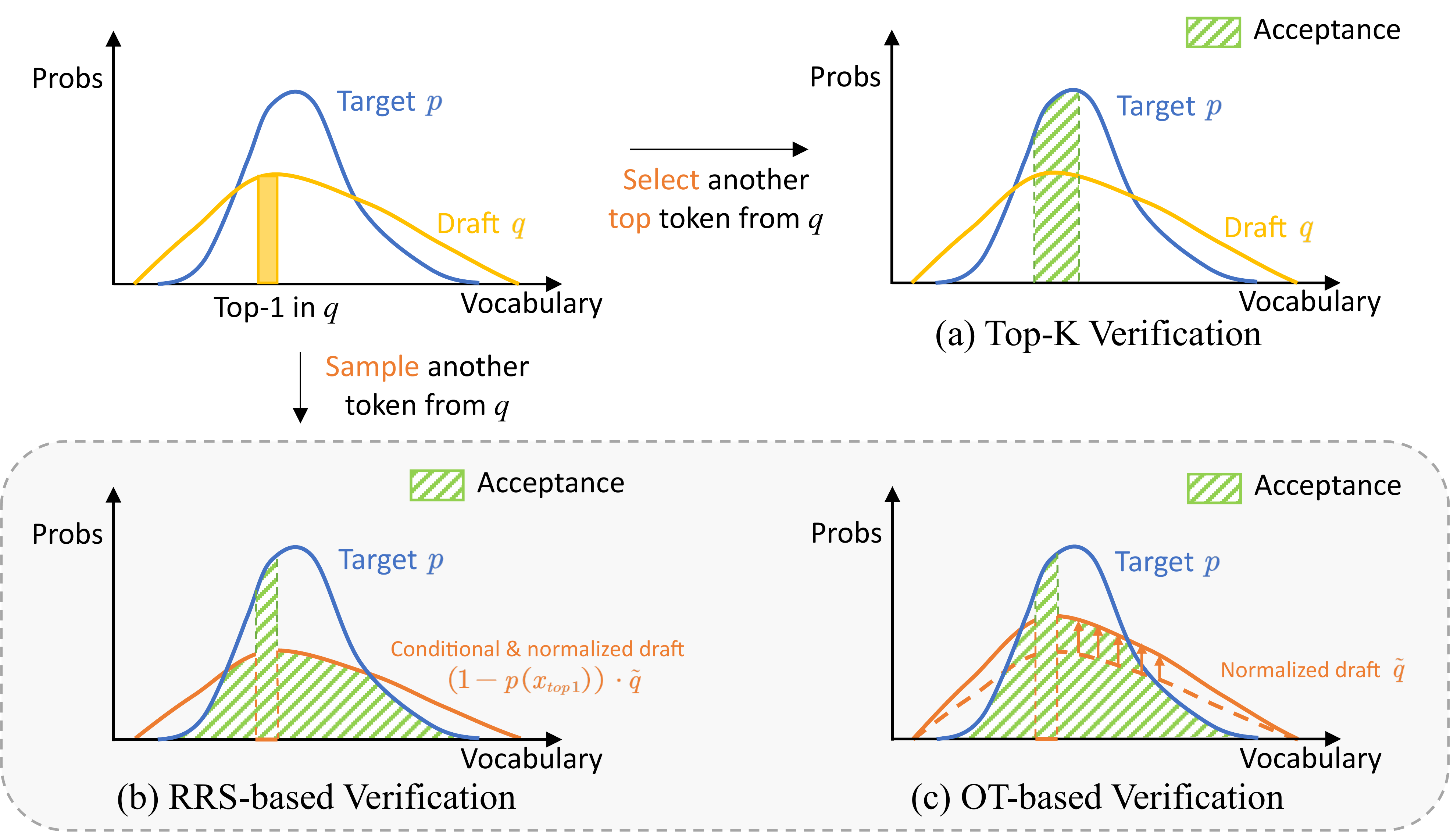}
\caption{Illustrative comparison of multi-draft verification strategies in a simplified two-draft scenario. Starting with the initial top-1 token, the second candidate is either deterministically selected (a) or stochastically sampled (b, c). (a) Top-$K$: Covers only isolated point masses. (b) RRS-based: Explores the distribution tail but verifies sequentially. (c) OT-based (Ours): Globally reallocates the probability mass (using the normalized draft $\tilde{q}$, resulting in larger overlap with target $p$). }
\label{fig:verification}
\end{figure}

\begin{figure}[h]
\centering
\begin{minipage}[t]{0.57\linewidth}

\paragraph{Algorithm overview.} Given the pruned candidate set $\mathcal{C}=\{u_1,\dots,u_n\}$ at a layer, if the sampled token $u_s$ survives reranking ($s\neq -1$), we first verify it independently using its \textit{true} sampling probability $\tilde{q}(u_s)$. With probability $\min(1, p(u_s)/\tilde{q}(u_s))$ the token is accepted and we descend into its subtree. If rejected, we compute the residual distribution $r=\text{norm}(\max(0, p-\tilde{q}))$ and perform recursive rejection over the remaining candidates under $r$. Should every candidate be rejected, we sample once from the final residual. When no sampled token is present ($s=-1$), the verification procedure degenerates to standard top-$K$ verification and relies solely on the target $p$.

\paragraph{Relation to RRS.} A naive alternative for verification is to apply Recursive Rejection Sampling without replacement (RRSw) over the candidates. While RRSw preserves losslessness, it verifies candidates sequentially without globally reallocating the probability mass. As illustrated in Figure~\ref{fig:verification}, pure top-$K$ covers only the isolated point masses. The advantage of our OT-based verification over RRSw lies precisely in achieving higher acceptance rate of the sampled token, effectively utilizing its stochastic flexibility to maximize the overall structural overlap with the target distribution. In addition to theoretical analysis, we provide empirical results in Appendix \ref{appendix:rrs}.

\end{minipage}
\hfill
\begin{minipage}[t]{0.4\linewidth}
\vspace{-17pt}
\begin{algorithm}[H]
\caption{RheoVerification}
\label{alg:verification}
\small
\begin{algorithmic}[1]
\Require Target distribution $p$; tail draft $\tilde{q}$; candidates $\mathcal{C}=\{u_1,\dots,u_n\}$; sample index $s$ ($-1$ if pruned)
\Ensure Next token $u^*$

\If{$s \neq -1$}
    \State $u_s \gets \mathcal{C}[s]$
    \State $\eta \sim \text{Uniform}(0,1)$
    \If{$\eta \leq \min(1, p(u_s)/\tilde{q}(u_s))$}
        \State \textbf{return} $u_s$
    \EndIf
    \State $\mathcal{C} \gets \mathcal{C} \setminus \{u_s\}$
    \State $r \gets \text{norm}(\max(0, p-\tilde{q}))$
\Else
    \State $r \gets p$
\EndIf

\For{$u_i \in \mathcal{C}$ in arbitrary order}
    \State $\eta \sim \text{Uniform}(0,1)$
    \If{$\eta \leq r(u_i)$}
        \State \textbf{return} $u_i$
    \EndIf
    \State $r(u_i) \gets 0$; $r \gets \text{norm}(r)$
\EndFor

\State Sample $u^* \sim r$
\State \textbf{return} $u^*$
\normalsize
\end{algorithmic}
\end{algorithm}
\end{minipage}
\end{figure}

\newtheorem{theorem}{Theorem}
\newtheorem{remark}[theorem]{Remark}
\newtheorem{lemma}[theorem]{Lemma}
\newtheorem{corollary}[theorem]{Corollary}
\newtheorem{proposition}[theorem]{Proposition}

\section{Theoretical Guarantees}
\label{sec:theo}

\textbf{The coupling challenge.} Establishing losslessness for RheoSampling is substantially harder than
for static-tree speculative decoding. 
In a dynamic tree, the stochastic token $Y\sim\tilde{q}$ not only
determines its own value, but also determines the identities of the subsequent fill tokens. These in turn
affect the path scores of deeper nodes and thereby influence which
branches survive global pruning. 
Consequently, the pruned tree $T_R$ is inherently random: both its
topology and the token fillings at each node are stochastic, and the
two sources of randomness are tightly coupled.
Fortunately, we overcome this difficulty by an equivalence-class analysis, and the full
proof is deferred to Appendix~\ref{app:proof}.

\begin{theorem}[Losslessness of RheoSampling]
\label{thm:lossless}
For any
pruning budget $R\ge 1$, and any token sequence $\mathbf{Seq}$,
let $T_R$ denote the random pruned tree produced by the RheoSampling
draft mechanism followed by global top-$R$ pruning, and $p$ is the target model's distribution. 
Then
\begin{equation}
    \mathbb{E}_{T_R}\Bigl[\Pr\bigl(\mathrm{Rheo}(T_R)=\mathbf{Seq}\bigr)\Bigr]
    \;=\; p(\mathbf{Seq}),
\end{equation}
where $\mathrm{Rheo}(T_R)$ is the output sequence of applying
Algorithm~\ref{alg:verification} on $T_R$ layer-by-layer.
\end{theorem}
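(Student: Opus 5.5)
The plan is an induction on the length of $\mathbf{Seq}$, layer by layer. It rests on a single-layer claim: conditional on the accepted prefix, the next token output by Algorithm~\ref{alg:verification} at the current node is distributed exactly as $p(\cdot\mid\text{prefix})$. This must hold after averaging over the randomness of the pruned tree $T_R$. Once the claim is established, the chain rule gives $\mathbb{E}_{T_R}[\Pr(\mathrm{Rheo}(T_R)=\mathbf{Seq})]=\prod_t p(x_t\mid x_{<t})=p(\mathbf{Seq})$. One case needs separate care: when the output token is not a child in $T_R$, i.e.\ it is resampled from the residual. Decoding past that point proceeds directly from the target model, and the same argument applies with a trivial (empty) tree.

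\textbf{Step 1: equivalence classes.} Fix a node $v$ whose children are produced from draft distribution $q$, with lead tokens $x_1,\dots,x_m$ and residual $\tilde q$. The sampled token $Y\sim\tilde q$ is drawn independently of everything above $v$. Its proxy score $\min\{q(x_m),z\}$ does not depend on the realized value $Y=y$. Hence the rank of the sampled slot, and therefore whether it survives global top-$R$ pruning, depends on $y$ only through the fill tokens and their descendants.

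To handle this, I group realizations of $T_R$ into classes. Two realizations are equivalent if they share the same ``skeleton'': the same survival pattern of slots, the same deterministic lead tokens, and the same indicator $s\neq-1$ of whether the sampled slot survives at $v$.

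The key claim is then the following. Conditional on the skeleton class and on $s\neq -1$ at $v$, the law of $Y$ is still $\tilde q$. At minimum, this should hold after summing over the fill identities, which are a deterministic function of $y$.

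I would prove this claim by showing that the skeleton-determining events are measurable with respect to randomness independent of $Y$. The proxy score fixes the sampled slot's position above all fill tokens. Fill tokens have scores no larger than $q(x_m)\ge$ proxy, so they rank below it. Consequently, whether the sampled node survives depends only on how many deterministic, higher-ranked nodes compete for the budget $R$, and these are independent of $y$.

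The deeper subtree of $Y$ is expanded with path scores built from the proxy times its children's $q$. Those child distributions do depend on $y$, however, so the shape of $Y$'s subtree can depend on $y$. I therefore expect the main obstacle to be the requirement that verification at $v$ use only the \emph{presence} of $Y$ (through $s$), not the shape below it. The fix I would use is to condition only on information at $v$'s layer and above. Deeper shapes affect only later layers, and those are handled by the induction conditionally on the accepted token.

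\textbf{Step 2: single-layer correctness.} On the class $s\neq-1$, the output distribution of Algorithm~\ref{alg:verification} is computed directly. The sampled token is accepted with mass $\min(p,\tilde q)$. Upon rejection, which has probability $1-\sum\min(p,\tilde q)$, the residual $r\propto(p-\tilde q)_+$ is targeted. The subsequent recursive-rejection loop over the deterministic candidates is lossless for any fixed candidate set; this is the standard property of sequential point-mass rejection with renormalization, which I would verify by a short induction on the number of candidates. The result is exact sampling from $r$. Combining the two pieces gives $\min(p,\tilde q)+(p-\tilde q)_+=p$.

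On the class $s=-1$, the loop starts directly from $r=p$, and the same lemma gives $p$.

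Since each class yields $p$, averaging over classes preserves $p$. This closes the induction step; the result needs only $\mathbb{E}$ over $T_R$, not pathwise correctness.
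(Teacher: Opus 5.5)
Your single-layer computation in Step 2 is correct and matches the paper's Case 2. The gap is in Step 1: the claim that, conditional on the skeleton (or, after your fix, on ``information at $v$'s layer and above'') and on $s\neq-1$, the sampled token $Y$ at $v$ is still $\tilde q$-distributed. Its justification, ``deeper shapes affect only later layers,'' is false. Global top-$R$ pruning and the per-depth top-$K$ expansion compare path scores across all depths, so $Y$ can influence every node ranked below its own slot, including shallower ones. Two concrete failures:
(i) The survival of the fill slots at $v$ depends on $Y$, because the fills are the top of $\mathcal V\setminus\{x_1,\dots,x_m,Y\}$. Take $q=(0.5,0.3,0.2)$ on $\{A,B,C\}$, $m=1$, $K=3$, and a competitor elsewhere of score $0.25\,S(v)$ contending for the last unit of budget, with all deeper descendants scoring lower. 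The fill survives iff $Y=C$, so conditioning on the survival pattern forces $Y=C$ instead of $\tilde q(C)=0.4$.
(ii) More seriously, take an ancestor $u$ whose top-1 child $v$ has mass $0.99$. Then $u$'s sampled slot has proxy $z=0.01$. If $Y$'s subtree is high-scoring for one value of $Y$ and negligible for another, it evicts $u$'s sampled slot for one value only. So $s_u$ depends on $Y$. Yet $s_u$ is information at $v$'s layer, and it enters the probability of having accepted $v$. Conditioning on it biases $Y$, and ``each class yields $p$'' fails for your classes.

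What actually makes $Y$ $\tilde q$-distributed given the accepted prefix is a cancellation you never invoke. Averaging over $u$'s own sampled token $Y_u$ makes the probability of accepting the deterministic child $v$ equal to $p(v)$ whether or not $s_u=-1$. Here $Y_u$ is independent of $s_u$, of $Y$, and of everything ranked above $u$'s sampled slot.

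In general the integration must be ordered by path-score rank, not by depth. A sampled token can only affect nodes ranked below its slot, so lower-ranked sampled tokens must be integrated out first. This is exactly what the paper's induction on $R$ does. Conditional on a fixed $T_k=\mathrm{Prune}_k(\mathcal U)$, the $(k+1)$-th slot is the same throughout the equivalence class (slot determinism).
\begin{itemize}
\item If that slot is deterministic, adding it leaves the output law unchanged pathwise.
\item If it is the sampled slot, its token is exactly $\tilde q$-distributed given $T_k$: its proxy score does not depend on $Y$, and nothing it influences lies in $T_k$. Averaging it out reproduces the corresponding layer of $T_k$, which has $s=-1$ and output law $p$.
\end{itemize}
Your depth-ordered induction could be rescued only by proving that the likelihood of the accepted prefix, after integrating out all lower-ranked sampled tokens, does not depend on $Y$. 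That amounts to the same rank-ordered argument.

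Separately, your ranking step is misstated: $q(X_j)\le q(x_m)$ together with $q(x_m)\ge q_{\mathrm{proxy}}$ yields nothing. You need $q(X_j)\le q(x_{m+1})\le\min\{q(x_m),z\}$, using $z\ge q(x_{m+1})$. This is the paper's local ranking lemma, and it is precisely what fails for an ad hoc proxy.
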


\textbf{A necessary bound on the proxy.} 
Independence of the proxy from $\tilde{q}$ is necessary but not sufficient: the proxy must also satisfy the ranking bound $q_{\mathrm{proxy}}\ge q(x_{m+1})$ (Lemma~\ref{lem:local-rank}). 
Otherwise the survival of the sampled slot depends on the sampled token, and conditioning on survival distorts the conditional distribution away from $\tilde{q}$.
Consider a vocabulary $\{A,B,C\}$ with draft distribution  $\{0.5,\,0.3,\,0.2\}$. 
Table~\ref{tab:proxy} shows the candidate-pool rankings under two proxy choices with $K=3$ and $m=1$.
\begin{itemize}
    \item Rheo-proxy probability: $q_{\mathrm{proxy}}=\min\{q(A),1-q(A)\}=0.5$. The sampled token always sits at rank $2$ regardless of the sampled token $Y$, so its survival is independent of $Y$. 

    \item Adhoc-proxy probability: $q_{\mathrm{proxy}}=0.25$. The rank of $Y$ varies with the fill tokens. Conditioned on survival, $Y$ is forced to be $B$ with probability $1$, rather than $\tilde{q}(B)=0.6$. Algorithm~\ref{alg:verification} then evaluates $Y$ against the wrong base probability, breaking losslessness.
\end{itemize}

\begin{table}[h]
\centering
\caption{Candidate-pool rankings and survival under two proxy choices.}
\label{tab:proxy}
\small
\begin{tabular}{@{}ccccl@{}}
\toprule
\textbf{Proxy} & \textbf{Sampled $Y$} & \textbf{Fill tokens} & \textbf{Pool ranking ($R=2$)} & \textbf{Survival tokens} \\
\midrule
\multirow{2}{*}{Rheo: $0.5$} 
  & $\tilde q(B)=0.6$ & $C$ & $\boxed{A(0.5)\ge Y(0.5)} \ge C(0.2)$ & $A, Y\leftarrow B$ \\
  & $\tilde q(C)=0.4$ & $B$ & $\boxed{A(0.5)\ge Y(0.5)} \ge B(0.3)$ & $A, Y\leftarrow C$ \\
\midrule
\multirow{2}{*}{Adhoc: $0.25$} 
  & $\tilde q(B)=0.6$ & $C$ & $\boxed{A(0.5)\ge Y(0.25)} \ge C(0.2)$ & $A, Y\leftarrow B$ \\
  & $\tilde q(C)=0.4$ & $B$ & $\boxed{A(0.5)\ge B(0.3)} \ge Y(0.25)$ & $A, B$ \\
\midrule
\end{tabular}
\end{table}

\begin{remark}
Table~\ref{tab:proxy} shows that \textbf{Rheo-proxy} design fixes the sampled slot at rank $m+1$ within a single pool, making the local sibling order frozen. 
Yet across the full tree, the realized token $Y$ still enters the path scores of all descendants, and this in turn determines which branches survive global pruning. 
Hence, the \textbf{pruned tree topology} remains an endogenous random variable coupled with the sampled tokens, and a rigorous proof must account for this compounding randomness (see Appendix~\ref{app:proof}).
\end{remark}

We derive a closed-form per-layer acceptance rate (Theorem~\ref{thm:rheo-acc}); the proof and a comparison with the natural RRSw baseline (which places the sampled token later) are deferred to Appendices~\ref{app:acceptance}--\ref{app:comparison}.
As shown in Appendix~\ref{app:comparison}, under mild approximations, 
Rheo almost always dominates RRSw, which directly motivates the 
stochastic‑first strategy of Algorithm~\ref{alg:verification}.

\begin{theorem}[Single-layer acceptance rate]
\label{thm:rheo-acc}
Given a candidate pool $\mathcal{C}$ with $|\mathcal{C}|=n$ and $s\neq -1$ in Algorithm~\ref{alg:verification}, let $\mathrm{Top}_n$ be the $n$ highest-$q$ tokens and $x_n$ the $n$-th. The per-layer acceptance rate is
\begin{align}\label{eq:rheo-acc}
    \mathcal{A}_{\text{Rheo}}
    \;&=\sum_{v\in\mathrm{Top}_{n-1}} p(v)
    \;+\sum_{v\notin \mathrm{Top}_{n-1}} \min\bigl(p(v),\tilde{q}(v)\bigr)
    \;+r(x_n)\!\sum_{v\in\mathrm{Top}_{n-1}} \max\bigl(0,\tilde{q}(v)-p(v)\bigr) \notag \\
    &\le \sum_{v\in\mathrm{Top}_{n}} p(v)
    \;+\sum_{v\notin \mathrm{Top}_{n}} \min\bigl(p(v),\tilde{q}(v)\bigr),
\end{align}
where $r=\mathrm{norm}(\max(0,p-\tilde{q}))$. In addition, $\mathcal{A}_{\text{Rheo}}=\sum_{v\in\mathrm{Top}_{n}} p(v)$ for $s=-1$.
\end{theorem}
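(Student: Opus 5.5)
The plan is to compute the probability that the output of Algorithm~\ref{alg:verification} lands in the candidate pool $\mathcal{C}$, after first conditioning on the realized sampled token $Y=y\sim\tilde q$. Two structural facts drive everything.

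First, the non-sampled slots are filled greedily by $q$-rank. So the deterministic part of the pool is $D(y)=\mathrm{Top}_{n-1}$ when $y\notin\mathrm{Top}_{n-1}$, and $D(y)=(\mathrm{Top}_{n-1}\setminus\{y\})\cup\{x_n\}$ when $y\in\mathrm{Top}_{n-1}$. The second case can occur only for fill positions, since $\tilde q$ vanishes on $\mathrm{Top}_m$.

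Second, recursive rejection without replacement under a distribution $r$, followed by a final residual sample, outputs exactly a draw from $r$. This is the standard RRS losslessness. Hence, given that the deterministic stage is reached, the output lies in $D(y)$ with probability $r(D(y))$.

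Combining these, conditional on $Y=y$ the algorithm has two branches:
\begin{itemize}
\item It accepts $y$ with probability $\min(1,p(y)/\tilde q(y))$.
\item Otherwise it proceeds with $r=\mathrm{norm}(\max(0,p-\tilde q))$. This branch has positive probability only when $\tilde q(y)>p(y)$, and then $r(y)=0$.
\end{itemize}
Averaging over $y$ gives
\begin{equation*}
\mathcal{A}_{\text{Rheo}}=\sum_y \min\bigl(p(y),\tilde q(y)\bigr)+\sum_y \max\bigl(0,\tilde q(y)-p(y)\bigr)\, r\bigl(D(y)\bigr).
\end{equation*}

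Next, I split the second sum according to whether $y\in\mathrm{Top}_{n-1}$. For $y\in\mathrm{Top}_{n-1}$ with $\tilde q(y)>p(y)$ we have $r(y)=0$, so $r(D(y))=r(\mathrm{Top}_{n-1})+r(x_n)$. For $y\notin\mathrm{Top}_{n-1}$ we simply have $r(D(y))=r(\mathrm{Top}_{n-1})$. The second sum therefore equals
\begin{equation*}
r(\mathrm{Top}_{n-1})\,Z+r(x_n)\sum_{v\in\mathrm{Top}_{n-1}}\max(0,\tilde q(v)-p(v)),
\end{equation*}
where $Z=\sum_v\max(0,\tilde q-p)=\sum_v\max(0,p-\tilde q)$ is the normalizer of $r$. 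Since $r(\mathrm{Top}_{n-1})Z=\sum_{v\in\mathrm{Top}_{n-1}}\max(0,p(v)-\tilde q(v))$, this term can be merged with $\sum_{v\in\mathrm{Top}_{n-1}}\min(p,\tilde q)$ to give $\sum_{v\in\mathrm{Top}_{n-1}}p(v)$. That yields the equality in \eqref{eq:rheo-acc}. If $Z=0$ then $p=\tilde q$, so $y$ is always accepted and both sides reduce directly.

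For the inequality, I compare the right-hand side with the left-hand side. The difference between the two sides comes only from moving $x_n$ from the ``$\min$'' sum into the ``$p$'' sum, which gains $p(x_n)-\min(p(x_n),\tilde q(x_n))=\max(0,p(x_n)-\tilde q(x_n))=r(x_n)Z$. It therefore suffices to show
\begin{equation*}
r(x_n)\sum_{v\in\mathrm{Top}_{n-1}}\max(0,\tilde q(v)-p(v))\le r(x_n)Z,
\end{equation*}
which holds because the left-hand sum is a partial sum of the nonnegative terms making up $Z$.

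The case $s=-1$ is immediate. No sampled token is present, so the pool is exactly $\mathrm{Top}_n$ and $r=p$. By the same RRS fact, the output lies in $\mathrm{Top}_n$ with probability $p(\mathrm{Top}_n)$.

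The step requiring most care is the conditional bookkeeping: correctly describing $D(y)$ when $y$ collides with a would-be fill token, and justifying $r(y)=0$ on the rejection branch. The latter is exactly what lets $x_n$'s residual mass enter cleanly. Beyond that the computation is elementary.
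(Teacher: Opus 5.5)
Your proof is correct. It differs from the paper's only in how the equality is organized.

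The paper fixes the \emph{output} token $v$ and computes $\Pr[\text{output}=v]$ class by class:
\begin{itemize}
\item Every $v\in\mathrm{Top}_{n-1}$ is present in all realizations of the pool and is emitted with probability exactly $p(v)$.
\item $x_n$ is present only as the last fill token when $Y\in\{x_{m+1},\dots,x_{n-1}\}$. This produces $r(x_n)\sum_{v\in\mathrm{Top}_{n-1}}\max(0,\tilde q(v)-p(v))$ on top of $\min(p(x_n),\tilde q(x_n))$.
\item Tokens outside $\mathrm{Top}_n$ can only be accepted as $Y$.
\end{itemize}

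You instead condition on the \emph{sampled} token $Y=y$ and apply the RRS fact once to get $\Pr[\text{accept}\mid Y=y]=\min\bigl(1,p(y)/\tilde q(y)\bigr)+\bigl(1-\min(1,p(y)/\tilde q(y))\bigr)\,r(D(y))$. The collision of $y$ with a would-be fill token is absorbed into $D(y)$, together with $r(y)=0$ on the rejection branch. The two arguments are the same double sum over (sampled token, accepted token), taken in opposite orders.

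Your ordering is slightly more economical. It also makes plain that the extra $x_n$ term comes exactly from rejected draws $Y\in\mathrm{Top}_{n-1}$ that vacated a fill slot. The paper's per-token bookkeeping yields more information, namely the individual output probabilities. That per-token form is what the paper reuses for the RRSw formula and the Rheo-versus-RRSw comparison.

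Your inequality step is the same as the paper's: $\sum_{v\in\mathrm{Top}_{n-1}}\max(0,\tilde q(v)-p(v))\le Z$ together with $p(x_n)=\min(p(x_n),\tilde q(x_n))+r(x_n)Z$.

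For $s=-1$, which the paper asserts without proof, your claim that the pool is exactly $\mathrm{Top}_n$ relies on an unstated fact: proxies are non-increasing along the slot order (Lemma~\ref{lem:local-rank}). With the left-to-right tie-break, the survivors of a sibling group therefore form a prefix. Losing the sampled slot then removes every fill slot and leaves $x_1,\dots,x_n$ with $n\le m$. You should state this explicitly.
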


\section{Experiments}
\label{sec:exp}
\subsection{Experimental Setup}

\textbf{Datasets and Models.}
We evaluate RheoSampling across six diverse benchmarks: Alpaca~\cite{alpaca}, GSM8K~\cite{gsm8k}, HumanEval~\cite{humaneval}, MT-bench~\cite{mt-bench}, Natural Questions~\cite{nq}, and CNN/DailyMail~\cite{cnndm}. Each dataset consists of 80 questions, spanning instruction following, mathematical reasoning, code generation, multi-turn dialogue, question answering, and summarization, ensuring a comprehensive assessment of generation quality and efficiency. For target models, we employ Llama-3.1-8B-Instruct~\cite{llama3}, Vicuna-13B-v1.3~\cite{mt-bench}, and DeepSeek-R1-Distill-Llama-8B~\cite{dsl}. All draft models use the officially released EAGLE-3 checkpoints~\cite{li2025eagle3} without further fine-tuning.

\textbf{Metrics and Implementation.}
We report two primary metrics: \textit{average acceptance length} ($\tau$), defined as the average number of tokens accepted per drafting-verification cycle, and \textit{actual speedup}, measured as the wall-clock time reduction relative to autoregressive decoding. Our implementation builds upon the open source EAGLE codebase~\cite{LiEAGLE, LiEAGLE2, li2025eagle3}. The default size of the decoding tree is 60 and the draft depth is 8, following the original EAGLE-3 setting.  All experiments are conducted on a single NVIDIA A6000 GPU with over three independent runs with different random seeds. Unless otherwise specified, we adopt the default temperature $T=1.0$.

\subsection{Main Results}

Table~\ref{tab:main_results} presents the end-to-end evaluation of RheoSampling against the vanilla Top-$K$ baseline across six benchmarks and three target models. RheoSampling achieves consistent improvements in mean acceptance length ($\tau$) across all configurations, with gains ranging from $+0.14$ (Vicuna-13B) to $+0.22$ (DeepSeek-R1-Distill-8B). The absolute improvement varies by task and model, reflecting differences in draft-target alignment and distribution tail mass. Nevertheless, the average gains over pure top-$K$ sampling are statistically stable and significant.

These improvement in acceptance length translate directly into wall-clock speedup, though the relative speedup improvement is slightly smaller than the $\tau$ gain. For instance, on Llama-3.1-8B, $\tau$ improves by 4.2\% ($5.04 \to 5.25$) while speedup increases by 3.2\% ($2.84\times \to 2.93\times$). This stems from the additional overhead of sampling and verifying the stochastic probe, which is not present in the pure Top-$K$ baseline. Nevertheless, the net speedup is strictly positive across all models, confirming that the theoretical benefits of stochastic exploration outweigh its marginal computational cost.

\begin{table}[t]
\centering
\small
\setlength{\tabcolsep}{5pt}
\caption{Main results with EAGLE-3 on six different tasks. We report the mean acceptance length (mean{\tiny ±std} over 3 runs) and the end-to-end speedup. V-13B, L31-8B and DSL-8B denote Vicuna-13B-v1.3, Llama-3.1-8B-Instruct and DeepSeek-R1-Distill-Llama-8B, respectively.}
\label{tab:main_results}
\begin{tabular}{@{}llcccccc cc@{}}
\toprule
\multirow{2}{*}{\textbf{Model}} & \multirow{2}{*}{\textbf{Sampling}} & \multicolumn{6}{c}{\textbf{Tasks}} & \multicolumn{2}{c}{\textbf{Average}} \\
\cmidrule(lr){3-8} \cmidrule(lr){9-10}
 & & \textbf{Alpaca} & \textbf{Math} & \textbf{Code} & \textbf{MT} & \textbf{QA} & \textbf{Sum} & \textbf{$\tau$} & \textbf{Speedup} \\
\midrule
\multirow{2}{*}{V-13B} 
 & Top-$K$ & 5.59{\tiny ±0.09} & 5.85{\tiny ±0.09} & 6.66{\tiny ±0.13} & 5.69{\tiny ±0.06} & 4.91{\tiny ±0.08} & 5.80{\tiny ±0.03} & 5.75{\tiny ±0.03} & 3.37$\times$ \\
 & Rheo & \textbf{5.92}{\tiny ±0.05} & \textbf{5.87}{\tiny ±0.05} & \textbf{6.72}{\tiny ±0.06} & \textbf{5.88}{\tiny ±0.09} & \textbf{5.00}{\tiny ±0.04} & \textbf{5.94}{\tiny ±0.07} & \textbf{5.89}{\tiny ±0.03} & \textbf{3.43}$\times$ \\
\midrule
\multirow{2}{*}{L31-8B} 
 & Top-$K$ & 5.68{\tiny ±0.02} & 5.50{\tiny ±0.03} & 6.11{\tiny ±0.09} & 4.60{\tiny ±0.06} & 3.89{\tiny ±0.04} & 4.49{\tiny ±0.07} & 5.04{\tiny ±0.02} & 2.84$\times$ \\
 & Rheo & \textbf{6.03}{\tiny ±0.12} & \textbf{5.60}{\tiny ±0.08} & \textbf{6.23}{\tiny ±0.03} & \textbf{4.84}{\tiny ±0.14} & \textbf{4.14}{\tiny ±0.04} & \textbf{4.66}{\tiny ±0.07} & \textbf{5.25}{\tiny ±0.03} & \textbf{2.93}$\times$ \\
\midrule
\multirow{2}{*}{DSL-8B} 
 & Top-$K$ & 4.48{\tiny ±0.04} & 6.70{\tiny ±0.05} & 5.43{\tiny ±0.09} & 4.93{\tiny ±0.02} & 4.08{\tiny ±0.05} & 4.32{\tiny ±0.06} & 4.99{\tiny ±0.01} & 2.89$\times$ \\
 & Rheo & \textbf{4.79}{\tiny ±0.06} & \textbf{6.71}{\tiny ±0.07} & \textbf{5.79}{\tiny ±0.05} & \textbf{5.16}{\tiny ±0.05} & \textbf{4.35}{\tiny ±0.05} & \textbf{4.47}{\tiny ±0.06} & \textbf{5.21}{\tiny ±0.02} & \textbf{2.99}$\times$ \\
\bottomrule
\end{tabular}
\end{table}

\subsection{Ablation Study on Rheostat Parameter}

As shown in Figure~\ref{fig:ablation_m}, $m=1$ achieves the best or near-best performance in most settings. $m=0$ and $m=2$ remain viable and outperform the Top-$K$ baseline, but $m \geq 3$ suffers from diminishing proxy probability, causing the sampled token to be pruned during reranking and yielding marginal gains.

\begin{figure}[h]
\centering
\includegraphics[width=0.9\linewidth]{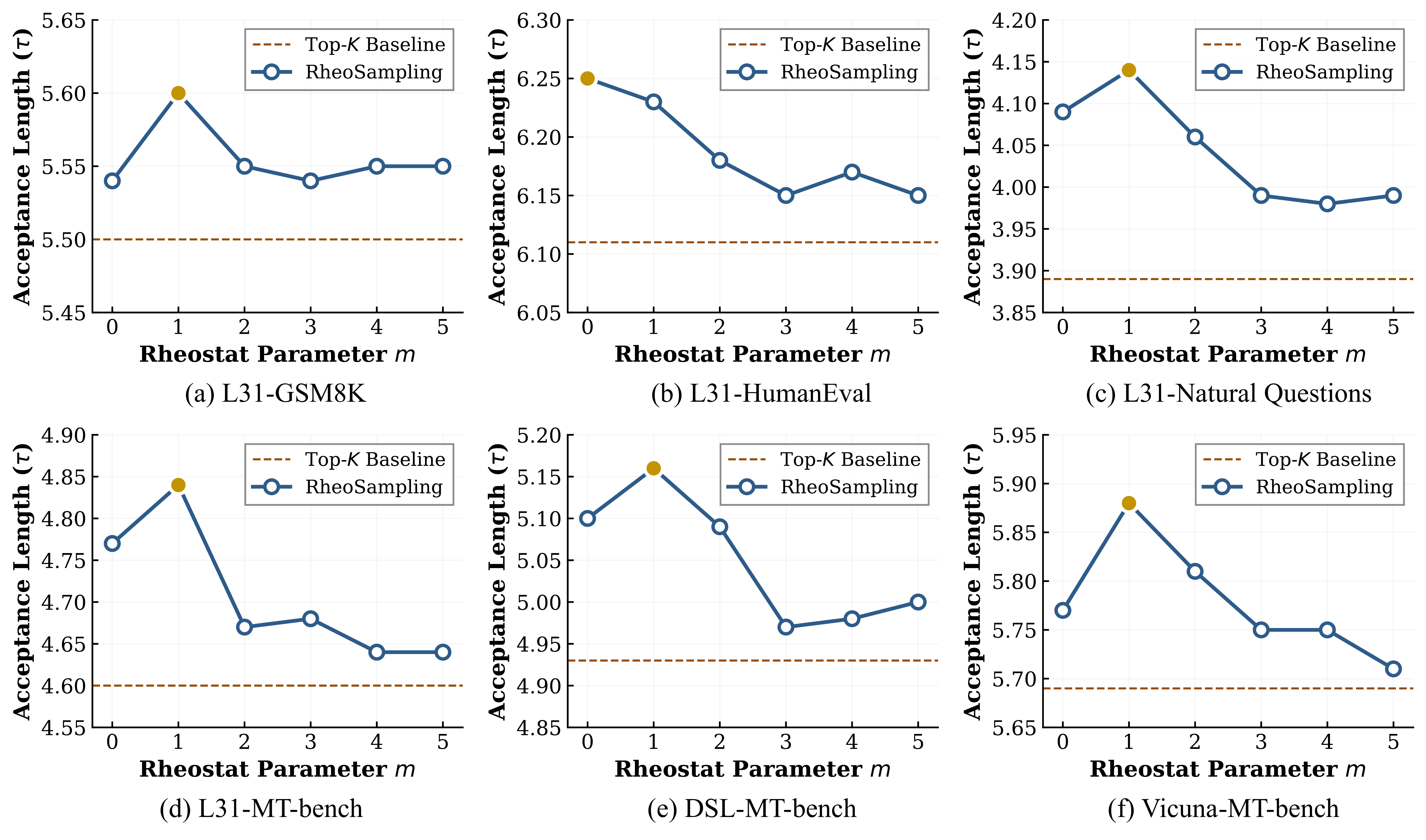}
\caption{Ablation on the rheostat parameter $m$. \textbf{Top row:} Llama-3.1-8B-Instruct across tasks (GSM8K, HumanEval, Natural Questions). \textbf{Bottom row:} Results on MT-bench across different target models (Llama-3.1-8B-Instruct, DeepSeek-R1-Distill-Llama-8B, Vicuna-13B-v1.3).} 
\label{fig:ablation_m}
\end{figure}

\textbf{Verification perspective.}
When $m=0$, the residual distribution $\tilde{q}$ degenerates to the full draft distribution $q$, reducing the verification of the sampled token to standard rejection sampling. As a result, the OT-based probability reallocation, which benefits from the exclusion of top-$m$ mass to raise the normalized residual probability, is not fully exploited. As $m$ increases, the top-$m$ mass enters the normalization denominator, enabling higher theoretical acceptance bounds under the OT framework; however, excessively large $m$ suppresses the proxy probability below the survival threshold, preventing the sampled token from realizing these gains in the final tree.

\textbf{Tree structure perspective.}
The impact of the sampled token on tree topology depends on its assigned slot. Under $m=0$, the proxy is just slightly above $q(x_1)$, anchoring the token at the first slot. Therefore, if the sampled token coincides with the original top-1 candidate, the behavior of tree expansion and pruning at this node is effectively identical to pure Top-$K$; otherwise, the tree will alter, introducing unpredictable structural deviation. In this sense, $m=0$ is an exception: it may preserve the original structure by chance, or perturb it significantly. By contrast, for any $m>0$, the proxy injection definitely affects the tree structure, though the impact becomes increasingly modest as $m$ grows and more deterministic tokens precede the sampled one.

\textbf{The sweet spot of rheostat.}
Under standard stochastic decoding conditions, $m=1$ strikes a favorable balance: it anchors the sampled token at the second slot, ensuring sufficient proxy mass to survive reranking while fully leveraging the OT-based verification advantage. The structural impact is moderate and stable, unlike the all-or-nothing behavior of $m=0$. In fact, this trade-off is temperature-dependent. We analyze this interaction in Section~\ref{subsec:temperature}.

\subsection{Impact of Temperature}
\label{subsec:temperature}

\textbf{Decoupling tree construction from temperature.}
An important implementation detail is that tree construction (expansion and reranking) operates on the \textit{original} draft probabilities without temperature scaling, consistent with standard EAGLE practice. Temperature only affects the sampling and verification stages. This decoupling is necessary because low temperatures can distort the relative ranking of tokens and destroy the ordinal information. 

\textbf{Acceptance Overview.} Figure~\ref{fig:ablation_t} demonstrates the impact of temperature. As temperature decreases, the distribution becomes increasingly sharp, causing any probability-based sampling to degenerate toward deterministic top-token selection. Mathematically, the verification strategies such as Top-$K$, RRS-based or our OT-based scheme become nearly equivalent in this limit, and their difference in acceptance rate narrows. All methods exhibit rising acceptance lengths as $T$ drops, though their respective structural behaviors remain distinct.

\textbf{Degeneracy asymmetry.}
When $T \to 0$, RheoSampling degenerates along two axes: \textit{sampling degeneracy} (the stochastic probe converges to deterministic selection) and \textit{structural degeneracy} (the tree topology matches pure Top-$K$). Yet, $m=0$ and $m=1$ exhibit distinct degeneracy patterns.

Under $m=0$, the proxy is $q(x_1)+\epsilon$. At low temperature, the residual concentrates on the true top-1 candidate, which is almost surely sampled. Since the proxy closely matches the original top-1 probability used for tree construction, the resulting topology is identical to pure Top-$K$. Thus, $m=0$ achieves \textit{dual degeneracy}: both sampling and structure collapse to the Top-$K$ baseline. Under $m=1$, only sampling degenerates. At $T \to 0$, the residual concentrates on the true second-ranked token, but its proxy $\min\{q(x_1), z\}$ does not equal its original probability $q(x_2)$. While verification evaluates the token using its true sampling probability, tree construction still operates on a mismatched proxy, perturbing path scores. Consequently, $m=1$ exhibits only \textit{single degeneracy}: sampling collapses to Top-$K$, but the tree structure remains distinct. This subtle structural discrepancy is precisely what causes the acceptance gap between $m=1$ and the Top-$K$ / $m=0$ baselines as $T \to 0$.


\begin{figure}[t]
\centering
\includegraphics[width=1.0\linewidth]{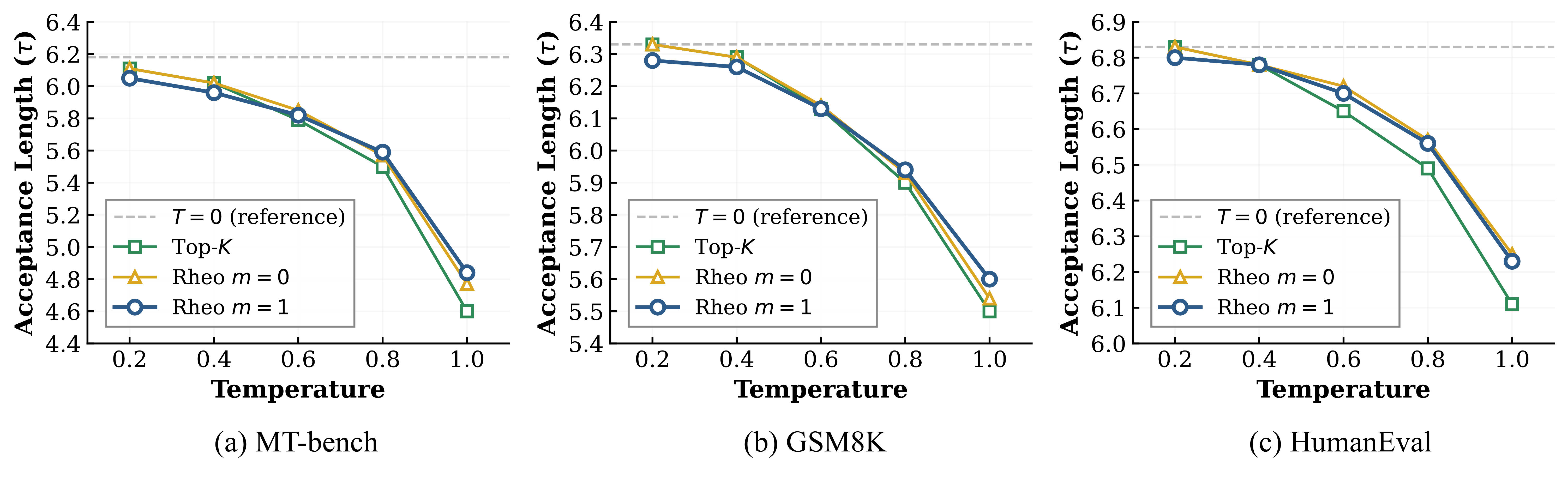}
\caption{Temperature ablation on Llama-3.1-8B-Instruct across MT-bench, GSM8K, and HumanEval. The dashed grey line marks the $T=0$ (greedy decoding) reference.}
\label{fig:ablation_t}
\end{figure}

\section{Related Work}

\paragraph{Speculative decoding and tree structures.}
Speculative decoding introduces a drafting-verification paradigm that accelerates LLM inference without sacrificing generation quality~\cite{LeviathanSpec, ChenSpec}. Early works primarily employed static tree structures for parallel verification, such as the manually designed trees in Medusa~\cite{Medusa} and SpecInfer~\cite{SpecInfer}, as well as EAGLE-1~\cite{LiEAGLE}. Subsequent research shifted towards dynamic, context-aware tree construction. EAGLE-2~\cite{LiEAGLE2} and EAGLE-3~\cite{li2025eagle3} employ an expand-then-rerank paradigm that adaptively selects candidates based on path probabilities. Opt-Tree~\cite{optree} shares the motivation for context-aware dynamic construction, employing a slightly different optimization-based approach. Sequoia~\cite{ChenSequoia} adopts a hardware-aware tree topology via dynamic programming.

\paragraph{Multi-draft verification strategies.}
Verification strategies have evolved from chain-based to tree-based settings. SpecInfer~\cite{SpecInfer} introduced Recursive Rejection Sampling (RRS) for multi-draft scenarios, later refined into RRS without replacement \cite{LiEAGLE, ChenSequoia, YangMCSD, JeonRSD} to prevent repeat sampling of identical tokens. From an optimal transport (OT) perspective, SpecTr~\cite{SpecTr} first formulated multi-draft verification as OT problem. Subsequent works such as SpecHub~\cite{SpecHub} and Greedy method~\cite{TowardsOptimal} explored hybrid drafting strategies that combine deterministically selected drafts with sampled tokens. However, these approaches are designed for static candidate pools, whereas RheoSampling integrates hybrid stochasticity into \textit{dynamic} tree construction, enabling context-aware expansion while achieving the OT upper bound on per-layer acceptance rates under certain sampling strategies.

\section{Conclusion}

We present \textbf{RheoSampling}, resolving the one-hot dilemma in dynamic-tree speculative decoding where deterministic expansion inherently conflicts with stochastic generation. By assigning a sampled token a \textit{proxy probability} for context-aware tree construction and its \textit{true sampling probability} for OT-based verification, our dual-identity framework successfully decouples topology from randomness. Crucially, we establish the first rigorous proof of losslessness for stochastic dynamic trees through a novel equivalence-class analysis that compresses the otherwise intractable probability space. Coupled with a sparse draft mechanism, RheoSampling translates these theoretical guarantees into consistent empirical speedups and superior acceptance rates over state-of-the-art baselines. By unifying dynamic topologies with mathematically sound stochastic sampling, this work provides a foundational template for adapting advanced verification algorithms to complex tree structures.


\bibliographystyle{acl_natbib}  
\small
\bibliography{Reference}

@article{GPT-4,
  author={OpenAI},
  title={{GPT-4} Technical Report},
  journal={arXiv preprint arXiv:2303.08774},
  year={2023},
}

@article{GPT-5,
  author       = {OpenAI},
  title        = {OpenAI {GPT-5} System Card},
  journal={arXiv preprint arXiv.2601.03267},
  year         = {2026},
}

@article{llama3,
  title={The Llama 3 Herd of Models}, 
  author={Aaron Grattafiori and Abhimanyu Dubey and Abhinav Jauhri and Abhinav Pandey and Abhishek Kadian and others},
  journal={arXiv preprint arXiv:2407.21783}, 
  year={2024},
}

@article{qwen3,
    title={Qwen3 Technical Report}, 
    author={An Yang and Anfeng Li and Baosong Yang and Beichen Zhang and Binyuan Hui and Bo Zheng and Bowen Yu and Chang Gao and Chengen Huang and Chenxu Lv and Chujie Zheng and Dayiheng Liu and Fan Zhou and Fei Huang and Feng Hu and Hao Ge and Haoran Wei and Huan Lin and Jialong Tang and Jian Yang and Jianhong Tu and Jianwei Zhang and Jianxin Yang and Jiaxi Yang and Jing Zhou and Jingren Zhou and Junyang Lin and Kai Dang and Keqin Bao and Kexin Yang and Le Yu and Lianghao Deng and Mei Li and Mingfeng Xue and Mingze Li and Pei Zhang and Peng Wang and Qin Zhu and Rui Men and Ruize Gao and Shixuan Liu and Shuang Luo and Tianhao Li and Tianyi Tang and Wenbiao Yin and Xingzhang Ren and Xinyu Wang and Xinyu Zhang and Xuancheng Ren and Yang Fan and Yang Su and Yichang Zhang and Yinger Zhang and Yu Wan and Yuqiong Liu and Zekun Wang and Zeyu Cui and Zhenru Zhang and Zhipeng Zhou and Zihan Qiu},
    journal = {arXiv preprint arXiv:2505.09388},
    year={2025}
}

@inproceedings{LeviathanSpec,
  author       = {Yaniv Leviathan and
                  Matan Kalman and
                  Yossi Matias},
  title        = {Fast Inference from Transformers via Speculative Decoding},
  booktitle    = {International Conference on Machine Learning, {ICML} 2023, 23-29 July
                  2023, Honolulu, Hawaii, {USA}},
  year         = {2023},
}

@article{ChenSpec,
  author       = {Charlie Chen and
                  Sebastian Borgeaud and
                  Geoffrey Irving and
                  Jean{-}Baptiste Lespiau and
                  Laurent Sifre and
                  John Jumper},
  title        = {Accelerating Large Language Model Decoding with Speculative Sampling},
    journal={arXiv preprint arXiv:2302.01318},
    year = {2023}
    }

@inproceedings{SpecInfer,
  author       = {Xupeng Miao and
                  Gabriele Oliaro and
                  Zhihao Zhang and
                  Xinhao Cheng and
                  Zeyu Wang and
                  Zhengxin Zhang and
                  Rae Ying Yee Wong and
                  Alan Zhu and
                  Lijie Yang and
                  Xiaoxiang Shi and
                  Chunan Shi and
                  Zhuoming Chen and
                  Daiyaan Arfeen and
                  Reyna Abhyankar and
                  Zhihao Jia},
  editor       = {Rajiv Gupta and
                  Nael B. Abu{-}Ghazaleh and
                  Madan Musuvathi and
                  Dan Tsafrir},
  title        = {SpecInfer: Accelerating Large Language Model Serving with Tree-based
                  Speculative Inference and Verification},
  booktitle    = {Proceedings of the 29th {ACM} International Conference on Architectural
                  Support for Programming Languages and Operating Systems, Volume 3,
                  {ASPLOS} 2024, La Jolla, CA, USA, 27 April 2024- 1 May 2024},
  year         = {2024},
}

@inproceedings{ChenSequoia,
  author       = {Zhuoming Chen and
                  Avner May and
                  Ruslan Svirschevski and
                  Yuhsun Huang and
                  Max Ryabinin and
                  Zhihao Jia and
                  Beidi Chen},
  title        = {Sequoia: Scalable and Robust Speculative Decoding},
  booktitle    = {Advances in Neural Information Processing Systems 38: Annual Conference
                  on Neural Information Processing Systems 2024, NeurIPS 2024, Vancouver,
                  BC, Canada, December 10 - 15, 2024},
  year         = {2024},
}

@inproceedings{LiEAGLE,
  author       = {Yuhui Li and
                  Fangyun Wei and
                  Chao Zhang and
                  Hongyang Zhang},
  title        = {{EAGLE:} Speculative Sampling Requires Rethinking Feature Uncertainty},
  booktitle    = {Forty-first International Conference on Machine Learning, {ICML} 2024,
                  Vienna, Austria, July 21-27, 2024},
  year         = {2024},
}

@article{YangMCSD,
  author       = {Sen Yang and
                  Shujian Huang and
                  Xinyu Dai and
                  Jiajun Chen},
  title        = {Multi-Candidate Speculative Decoding},
  journal      = {arXiv preprint arXiv:2401.06706},
  year         = {2024},
}

@article{JeonRSD,
  author       = {Wonseok Jeon and
                  Mukul Gagrani and
                  Raghavv Goel and
                  Junyoung Park and
                  Mingu Lee and
                  Christopher Lott},
  title        = {Recursive Speculative Decoding: Accelerating {LLM} Inference via Sampling
                  Without Replacement},
  journal      = {arXiv preprint arXiv:2402.14160},
  year         = {2024},
}

@inproceedings{SpecTr,
  author       = {Ziteng Sun and
                  Ananda Theertha Suresh and
                  Jae Hun Ro and
                  Ahmad Beirami and
                  Himanshu Jain and
                  Felix X. Yu},
  title        = {SpecTr: Fast Speculative Decoding via Optimal Transport},
  booktitle    = {Advances in Neural Information Processing Systems 36: Annual Conference
                  on Neural Information Processing Systems 2023, NeurIPS 2023, New Orleans,
                  LA, USA, December 10 - 16, 2023},
  year         = {2023},
}

@inproceedings{SpecHub,
  author       = {Ryan Sun and
                  Tianyi Zhou and
                  Xun Chen and
                  Lichao Sun},
  title        = {SpecHub: Provable Acceleration to Multi-Draft Speculative Decoding},
  booktitle    = {Proceedings of the 2024 Conference on Empirical Methods in Natural
                  Language Processing, {EMNLP} 2024, Miami, FL, USA, November 12-16,
                  2024},
  year         = {2024},
}

@article{TowardsOptimal,
  author       = {Zhengmian Hu and
                  Tong Zheng and
                  Vignesh Viswanathan and
                  Ziyi Chen and
                  Ryan A. Rossi and
                  Yihan Wu and
                  Dinesh Manocha and
                  Heng Huang},
  title        = {Towards Optimal Multi-draft Speculative Decoding},
  journal={arXiv preprint arXiv:2502.18779},
  year         = {2025},
}

@inproceedings{Medusa,
  author={Tianle Cai and Yuhong Li and Zhengyang Geng and Hongwu Peng and Jason D. Lee and Deming Chen and Tri Dao},
  title={Medusa: Simple {LLM} Inference Acceleration Framework with Multiple Decoding Heads},
  booktitle={Proceedings of the International Conference on Machine Learning},
  year={2024},
}

@inproceedings{LiEAGLE2,
  author       = {Yuhui Li and
                  Fangyun Wei and
                  Chao Zhang and
                  Hongyang Zhang},
  title        = {{EAGLE-2:} Faster Inference of Language Models with Dynamic Draft
                  Trees},
  booktitle    = {Proceedings of the 2024 Conference on Empirical Methods in Natural
                  Language Processing, {EMNLP} 2024, Miami, FL, USA, November 12-16,
                  2024},
  year         = {2024},
}

@article{optree,
  author       = {Jikai Wang and
                  Yi Su and
                  Juntao Li and
                  Qingrong Xia and
                  Zi Ye and
                  Xinyu Duan and
                  Zhefeng Wang and
                  Min Zhang},
  title        = {OPT-Tree: Speculative Decoding with Adaptive Draft Tree Structure},
  journal      = {Trans. Assoc. Comput. Linguistics},
  year         = {2025},
}

@inproceedings{mt-bench,
  author       = {Lianmin Zheng and
                  Wei{-}Lin Chiang and
                  Ying Sheng and
                  Siyuan Zhuang and
                  Zhanghao Wu and
                  Yonghao Zhuang and
                  Zi Lin and
                  Zhuohan Li and
                  Dacheng Li and
                  Eric P. Xing and
                  Hao Zhang and
                  Joseph E. Gonzalez and
                  Ion Stoica},
  title        = {Judging LLM-as-a-Judge with MT-Bench and Chatbot Arena},
  booktitle    = {Advances in Neural Information Processing Systems 36: Annual Conference
                  on Neural Information Processing Systems 2023, NeurIPS 2023, New Orleans,
                  LA, USA, December 10 - 16, 2023},
  year         = {2023},
}

@inproceedings{cnndm,
  author       = {Ramesh Nallapati and
                  Bowen Zhou and
                  C{\'{\i}}cero Nogueira dos Santos and
                  {\c{C}}aglar G{\"{u}}l{\c{c}}ehre and
                  Bing Xiang},
  title        = {Abstractive Text Summarization using Sequence-to-sequence RNNs and
                  Beyond},
  booktitle    = {Proceedings of the 20th {SIGNLL} Conference on Computational Natural
                  Language Learning, CoNLL 2016, Berlin, Germany, August 11-12, 2016},
  year         = {2016},

}

@article{gsm8k,
  author       = {Karl Cobbe and
                  Vineet Kosaraju and
                  Mohammad Bavarian and
                  Mark Chen and
                  Heewoo Jun and
                  Lukasz Kaiser and
                  Matthias Plappert and
                  Jerry Tworek and
                  Jacob Hilton and
                  Reiichiro Nakano and
                  Christopher Hesse and
                  John Schulman},
  title        = {Training Verifiers to Solve Math Word Problems},
  journal      = {arXiv preprint arXiv:2110.14168},
  year         = {2021},
}

@inproceedings{li2025eagle3,
    author = {Yuhui Li and Fangyun Wei and Chao Zhang and Hongyang Zhang},
    title = {{EAGLE-3}: Scaling up Inference Acceleration of Large Language Models via Training-Time Test}, 
    booktitle = {Annual Conference on Neural Information Processing Systems},
    year = {2025}
}

@inproceedings{
thomas2026global,
title={Global Resolution: Optimal Multi-Draft Speculative Sampling via Convex Optimization},
author={Rahul Krishna Thomas and Arka Pal},
booktitle={The Fourteenth International Conference on Learning Representations},
year={2026},
}

@article{humaneval,
  title={Evaluating Large Language Models Trained on Code},
  author={Mark Chen and Jerry Tworek and Heewoo Jun and Qiming Yuan and Henrique Ponde de Oliveira Pinto and Jared Kaplan and Harri Edwards and Yuri Burda and Nicholas Joseph and Greg Brockman and Alex Ray and Raul Puri and Gretchen Krueger and Michael Petrov and Heidy Khlaaf and Girish Sastry and Pamela Mishkin and Brooke Chan and Scott Gray and Nick Ryder and Mikhail Pavlov and Alethea Power and Lukasz Kaiser and Mohammad Bavarian and Clemens Winter and Philippe Tillet and Felipe Petroski Such and Dave Cummings and Matthias Plappert and Fotios Chantzis and Elizabeth Barnes and Ariel Herbert-Voss and William Hebgen Guss and Alex Nichol and Alex Paino and Nikolas Tezak and Jie Tang and Igor Babuschkin and Suchir Balaji and Shantanu Jain and William Saunders and Christopher Hesse and Andrew N. Carr and Jan Leike and Josh Achiam and Vedant Misra and Evan Morikawa and Alec Radford and Matthew Knight and Miles Brundage and Mira Murati and Katie Mayer and Peter Welinder and Bob McGrew and Dario Amodei and Sam McCandlish and Ilya Sutskever and Wojciech Zaremba},
  year={2021},
  journal={arXiv preprint arXiv:2107.03374},
}

@misc{alpaca,
  author = {Rohan Taori and Ishaan Gulrajani and Tianyi Zhang and Yann Dubois and Xuechen Li and Carlos Guestrin and Percy Liang and Tatsunori B. Hashimoto },
  title = {Stanford Alpaca: An Instruction-following LLaMA model},
  year = {2023},
  journal = {GitHub repository},
  howpublished = {\url{https://github.com/tatsu-lab/stanford_alpaca}},
}

@article{nq,
    title = "Natural Questions: A Benchmark for Question Answering Research",
    author = "Kwiatkowski, Tom  and
      Palomaki, Jennimaria  and
      Redfield, Olivia  and
      Collins, Michael  and
      Parikh, Ankur  and
      Alberti, Chris  and
      Epstein, Danielle  and
      Polosukhin, Illia  and
      Devlin, Jacob  and
      Lee, Kenton  and
      Toutanova, Kristina  and
      Jones, Llion  and
      Kelcey, Matthew  and
      Chang, Ming-Wei  and
      Dai, Andrew M.  and
      Uszkoreit, Jakob  and
      Le, Quoc  and
      Petrov, Slav",
    editor = "Lee, Lillian  and
      Johnson, Mark  and
      Roark, Brian  and
      Nenkova, Ani",
    journal = "Transactions of the Association for Computational Linguistics",
    volume = "7",
    year = "2019",
    pages = "452--466",
}

@article{dsl,
   title={DeepSeek-R1 incentivizes reasoning in LLMs through reinforcement learning},
   volume={645},
   ISSN={1476-4687},
   number={8081},
   journal={Nature},
   publisher={Springer Science and Business Media LLC},
   author={Guo, Daya and Yang, Dejian and Zhang, Haowei and Song, Junxiao and Wang, Peiyi and Zhu, Qihao and Xu, Runxin and Zhang, Ruoyu and Ma, Shirong and Bi, Xiao and Zhang, Xiaokang and Yu, Xingkai and Wu, Yu and Wu, Z. F. and Gou, Zhibin and Shao, Zhihong and Li, Zhuoshu and Gao, Ziyi and Liu, Aixin and Xue, Bing and Wang, Bingxuan and Wu, Bochao and Feng, Bei and Lu, Chengda and Zhao, Chenggang and Deng, Chengqi and Ruan, Chong and Dai, Damai and Chen, Deli and Ji, Dongjie and Li, Erhang and Lin, Fangyun and Dai, Fucong and Luo, Fuli and Hao, Guangbo and Chen, Guanting and Li, Guowei and Zhang, H. and Xu, Hanwei and Ding, Honghui and Gao, Huazuo and Qu, Hui and Li, Hui and Guo, Jianzhong and Li, Jiashi and Chen, Jingchang and Yuan, Jingyang and Tu, Jinhao and Qiu, Junjie and Li, Junlong and Cai, J. L. and Ni, Jiaqi and Liang, Jian and Chen, Jin and Dong, Kai and Hu, Kai and You, Kaichao and Gao, Kaige and Guan, Kang and Huang, Kexin and Yu, Kuai and Wang, Lean and Zhang, Lecong and Zhao, Liang and Wang, Litong and Zhang, Liyue and Xu, Lei and Xia, Leyi and Zhang, Mingchuan and Zhang, Minghua and Tang, Minghui and Zhou, Mingxu and Li, Meng and Wang, Miaojun and Li, Mingming and Tian, Ning and Huang, Panpan and Zhang, Peng and Wang, Qiancheng and Chen, Qinyu and Du, Qiushi and Ge, Ruiqi and Zhang, Ruisong and Pan, Ruizhe and Wang, Runji and Chen, R. J. and Jin, R. L. and Chen, Ruyi and Lu, Shanghao and Zhou, Shangyan and Chen, Shanhuang and Ye, Shengfeng and Wang, Shiyu and Yu, Shuiping and Zhou, Shunfeng and Pan, Shuting and Li, S. S. and Zhou, Shuang and Wu, Shaoqing and Yun, Tao and Pei, Tian and Sun, Tianyu and Wang, T. and Zeng, Wangding and Liu, Wen and Liang, Wenfeng and Gao, Wenjun and Yu, Wenqin and Zhang, Wentao and Xiao, W. L. and An, Wei and Liu, Xiaodong and Wang, Xiaohan and Chen, Xiaokang and Nie, Xiaotao and Cheng, Xin and Liu, Xin and Xie, Xin and Liu, Xingchao and Yang, Xinyu and Li, Xinyuan and Su, Xuecheng and Lin, Xuheng and Li, X. Q. and Jin, Xiangyue and Shen, Xiaojin and Chen, Xiaosha and Sun, Xiaowen and Wang, Xiaoxiang and Song, Xinnan and Zhou, Xinyi and Wang, Xianzu and Shan, Xinxia and Li, Y. K. and Wang, Y. Q. and Wei, Y. X. and Zhang, Yang and Xu, Yanhong and Li, Yao and Zhao, Yao and Sun, Yaofeng and Wang, Yaohui and Yu, Yi and Zhang, Yichao and Shi, Yifan and Xiong, Yiliang and He, Ying and Piao, Yishi and Wang, Yisong and Tan, Yixuan and Ma, Yiyang and Liu, Yiyuan and Guo, Yongqiang and Ou, Yuan and Wang, Yuduan and Gong, Yue and Zou, Yuheng and He, Yujia and Xiong, Yunfan and Luo, Yuxiang and You, Yuxiang and Liu, Yuxuan and Zhou, Yuyang and Zhu, Y. X. and Huang, Yanping and Li, Yaohui and Zheng, Yi and Zhu, Yuchen and Ma, Yunxian and Tang, Ying and Zha, Yukun and Yan, Yuting and Ren, Z. Z. and Ren, Zehui and Sha, Zhangli and Fu, Zhe and Xu, Zhean and Xie, Zhenda and Zhang, Zhengyan and Hao, Zhewen and Ma, Zhicheng and Yan, Zhigang and Wu, Zhiyu and Gu, Zihui and Zhu, Zijia and Liu, Zijun and Li, Zilin and Xie, Ziwei and Song, Ziyang and Pan, Zizheng and Huang, Zhen and Xu, Zhipeng and Zhang, Zhongyu and Zhang, Zhen},
   year={2025},
   pages={633–638} }
\normalsize


\appendix

\section{Comparison on Verification Strategies}
\label{appendix:rrs}

While RheoSampling's hybrid draft strategy inherently improves acceptance by covering the distribution tail, the verification algorithm itself also contributes to the final rate. As we have mentioned in \ref{sec:verification}, we design RheoVerification from an Optimal Transport perspective, achieving a higher acceptance rate for the sampled token while maintaining losslessness. In contrast, Recursive Rejection Sampling without replacement (RRSw) processes candidates sequentially without such global optimization.

\begin{table}[h]
\centering
\small
\setlength{\tabcolsep}{3pt}
\caption{End-to-end performance comparison across benchmarks. RheoSampling combined with RheoVerification consistently outperforms both the RRSw variant and the vanilla Top-$K$ baseline.}
\label{tab:rrs_results}
\begin{tabular}{ll|cccccc|c}
\toprule
\textbf{Sampling} & \textbf{Verification} & \textbf{Alpaca} & \textbf{GSM8K} & \textbf{HumanEval} & \textbf{MT-bench} & \textbf{Natural Q.} & \textbf{CNN/DM} & \textbf{Average} \\
\midrule
Top-$K$ & Vanilla & 5.68 & 5.50 & 6.11 & 4.60 & 3.89 & 4.49 & 5.04 \\
\midrule
\multirow{2}{*}{Rheo} & RRSw & 5.88 & 5.47 & 6.18 & 4.67 & 4.09 & 4.58 & 5.15 \\
                      & Rheo & \textbf{6.03} & \textbf{5.60} & \textbf{6.23} & \textbf{4.84} & \textbf{4.14} & \textbf{4.66} & \textbf{5.25} \\
\bottomrule
\end{tabular}
\end{table}

Table~\ref{tab:rrs_results} presents the end-to-end performance across multiple benchmarks. RheoSampling paired with RRSw already outperforms the vanilla Top-$K$ baseline, confirming the benefit of injecting stochasticity into the draft tree. Replacing RRSw with RheoVerification yields further consistent gains, validating that the RheoVerification captures additional acceptance probability beyond sequential rejection.

To isolate the pure effect of the verification strategy, we conduct a controlled demo experiment on MT-bench with fixed settings: draft depth is restricted to one layer, and the three candidate slots are set to top-1, sampled token, and the highest-ranked token outside the first two. Under this configuration, differences in acceptance rate stem solely from the verification algorithm. As shown in Table~\ref{tab:acceptance_demo}, RheoVerification achieves 85.4\% acceptance, compared to 83.5\% under RRSw and 80.0\% under vanilla Top-$K$. The 3.5\% gap between RRSw and Top-$K$ reflects the gain from hybrid sampling alone, while the additional 1.9\% gap between RheoVerification and RRSw precisely reflects the gain from OT-based mass allocation.

\begin{table}[h]
\centering
\caption{Acceptance rate comparison under controlled settings (MT-bench, draft depth=1, fixed candidate slots: top-1, sampled-1, and top-1 of remaining).}
\label{tab:acceptance_demo}
\begin{tabular}{llc}
\toprule
\textbf{Sampling} & \textbf{Verification} & \textbf{Acceptance} \\
\midrule
Top-$K$ & Vanilla & 80.0\% \\
\midrule
\multirow{2}{*}{Rheo} & RRSw & 83.5\% \\
                      & Rheo & \textbf{85.4\%} \\
\bottomrule
\end{tabular}
\end{table}

\section{Discussion on Sparse Draft Probability}
\label{appendix:sparse}

To validate the practical impact of sparse draft distributions, we conduct ablation studies on MT-bench, measuring draft coverage, acceptance length ($\tau$), and drafting latency across varying support sizes. Results are summarized in Table~\ref{tab:sparse_support}.

\paragraph{Impact on acceptance rate.}
As shown in Table~\ref{tab:sparse_support}, the primary purpose of the \textit{Full} configuration is to establish an upper bound for reference (4.89) using the dense draft distribution. Sparse support sizes from 128 to 1024 achieve comparable $\tau$ values (4.78--4.85), confirming that truncation does not systematically degrade acceptance performance. This is expected for two reasons: first, even at support size 128, the coverage of the original draft distribution exceeds $91\%$ on MT-bench (even higher on other datasets such as $\sim$96\% on Alpaca and $\sim$95\% on HumanEval), meaning the sparse and dense distributions are nearly identical for practical purposes. Second, the sparse operation only affects the single stochastically sampled token, leaving all deterministic top-$K$ candidates untouched. The minor fluctuations in $\tau$ across support sizes are well within statistical noise, and notably, the 128-support configuration actually outperforms the 512-support variant, suggesting that sparse approximation does not introduce consistent penalty.

\paragraph{Latency.}
For support sizes below 1024, drafting latency remains stable (within $\pm$0.2 ms of the 128 baseline), indicating that sparse truncation introduces negligible overhead to the overall tree construction pipeline. We adopt 128 as the default to avoid over-engineering hyperparameters.

\begin{table}[h]
\centering
\small
\setlength{\tabcolsep}{5pt}
\caption{Effect of sparse support size on draft coverage and average acceptance length ($\tau$). The \textit{Full} row provides a theoretical upper-bound reference using the dense vocabulary distribution. Drafting latency is reported for completeness; the Full configuration uses native PyTorch operators such as concatenation and multinomial, and is not optimized for speed.}
\label{tab:sparse_support}
\begin{tabular}{@{}lcccc@{}}
\toprule
\textbf{Sampling} & \textbf{Support Size} & \textbf{Coverage of $\textbf{q}$} & $\boldsymbol{\tau}$ & \textbf{Drafting Latency}\\
\midrule
\multirow{5}{*}{Rheo} & 128 & 91.0\% & 4.84 & 13.1 ms\\
                      & 256 & 91.8\% & 4.85 & 13.2 ms\\
                      & 512 & 92.2\% & 4.78 & 13.3 ms\\
                      & 1024 & 94.1\% & 4.83 & 13.3 ms\\
                      & Full & 100.0\% & 4.89 & 18.8 ms$^\dagger$\\
\midrule
Top-K & N/A & N/A & 4.60 & 12.9 ms\\
\bottomrule
\end{tabular}
\\[3pt]
\footnotesize $^\dagger$PyTorch native implementation over the full vocabulary and not optimized for speed; included as a $\tau$ reference only.
\end{table}


\section{Losslessness of RheoSampling}
\label{app:proof}

\subsection{Setup and Notation}
\label{app:setup}

Let $p$ denote the target next-token distribution at a given decoding layer, conditioned on the prefix generated so far. 
Let $q$ denote the original draft distribution over the full/sparse vocabulary $\mathcal{V}$.

\paragraph{Hybrid candidate pool.} 
For a fixed parent node, the top-$m$ deterministic tokens $\{x_1,\dots,x_m\}$ with probabilities $\{q(x_i)\}_{i=1}^m$, and the residual mass
\begin{equation*}
    z \;=\; 1-\sum_{i=1}^m q(x_i).
\end{equation*}
The stochastic token $Y$ is sampled from the tail distribution
\begin{equation*}
    \tilde{q}(Y) \;=\; \frac{q(Y)}{z},\qquad \forall Y\notin\{x_1,\dots,x_m\}.
\end{equation*}
The remaining $K-m-1$ slots are filled with the highest-ranked tokens from $\mathcal{V}\setminus\{x_1,\dots,x_m,Y\}$ without replacement, denoted $\{X_{m+2},\dots,X_K\}$. 
Each node in the candidate pool carries a \emph{proxy probability} for tree construction:
\begin{itemize}
    \item Top-$m$ tokens: $q_{\mathrm{proxy}}(x_i) = q(x_i)$, $i=1,\dots,m$;
    \item Sampled token: $q_{\mathrm{proxy}}(Y)=\min\{q(x_m),z\}$;
    \item Fill tokens: $q_{\mathrm{proxy}}(X_j) = q(X_j)$, $j=m+2,\dots,K$.
\end{itemize}

\paragraph{Path scores and global pruning.} 
For any node $u$ in the unpruned tree $\mathcal{U}$, its path proxy score is the product of proxy probabilities along the root-to-$u$ path. 
The global pruning operator $\mathrm{Prune}_R(\mathcal{U})$ retains exactly the $R$ nodes with highest path scores (ties broken arbitrarily but deterministically). 
Given $\mathcal{U}$, the pruned tree $T_R=\mathrm{Prune}_R(\mathcal{U})$ is fully deterministic.

\paragraph{Tie-breaking rule.} 
Global top-$R$ pruning ranks nodes first by path proxy score in descending order. 
When scores are equal, we break ties by depth (shallower nodes prioritized over deeper nodes) and then by left-to-right sibling order. 
Under this rule, the pruned dynamic tree is always ancestor-closed and connected.

\paragraph{Verification input.} 
At a fixed layer $\ell$ of the pruned tree $T_R$, let $\mathcal{C}=\{u_1,\dots,u_n\}$ be the set of candidate tokens at that layer, and let $s\in\{-1,1,\dots,n\}$ indicate the index of the sampled token ($s=-1$ if the sampled node was pruned away).

\subsection{Structural Properties of the Proxy Scores}
\label{app:structural}

The first lemma states that, within any single candidate pool, the sampled token is always ranked above every fill token, regardless of its realized identity.

\begin{lemma}[Local ranking preservation]
\label{lem:local-rank}
The proxy probability of the sampled token satisfies
\begin{equation*}
    q_{\mathrm{proxy}}(Y) \;=\; \min\{q(x_m),\,z\} \;\ge\; q(x_{m+1}),
    \label{eq:proxy-lower-bound}
\end{equation*}
where $q(x_{m+1})$ is the $(m+1)$-th largest probability in the draft distribution $q$. 
Consequently, the proxy probability of every fill token $X_j$ satisfies $q(X_j)\le q_{\mathrm{proxy}}(Y)$.
\end{lemma}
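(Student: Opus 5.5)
We prove the two inequalities $q(x_m)\ge q(x_{m+1})$ and $z\ge q(x_{m+1})$ separately; taking the minimum then gives the main bound. For $m\ge 1$, the tokens $x_1,\dots,x_m$ are by definition the $m$ highest-$q$ tokens in $\mathcal{V}$, listed in decreasing order. So $x_{m+1}$, the $(m+1)$-th largest, satisfies $q(x_m)\ge q(x_{m+1})$ by the ordering alone. For the residual mass, we write
\begin{equation*}
z \;=\; 1-\sum_{i=1}^m q(x_i) \;=\; \sum_{v\notin\{x_1,\dots,x_m\}} q(v) \;\ge\; q(x_{m+1}),
\end{equation*}
since $q$ is a probability distribution on $\mathcal{V}$ (full or sparse; in the sparse case we use the truncated, renormalized $q$ throughout). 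Its nonnegative entries then sum to one, and $x_{m+1}$ is one of the summands. Hence $\min\{q(x_m),z\}\ge q(x_{m+1})$.

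The degenerate case $m=0$ is handled by the convention already stated in the method section. There the proxy is $q(x_1)+\epsilon$, and $x_{m+1}=x_1$, so the bound holds strictly. Note that the formula $\min\{q(x_m),z\}$ is undefined when $m=0$, so the proof should state this case split explicitly.

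For the consequence, take any fill token $X_j$ with $j\in\{m+2,\dots,K\}$. It is drawn from $\mathcal{V}\setminus\{x_1,\dots,x_m,Y\}$, so in particular $X_j\notin\{x_1,\dots,x_m\}$. Every token outside the top-$m$ set has $q$-probability at most $q(x_{m+1})$, the largest value among the remaining tokens. This holds whether or not $Y=x_{m+1}$: if $Y$ happens to be $x_{m+1}$, the fill tokens are even lower ranked. Chaining with the first part gives $q(X_j)\le q(x_{m+1})\le q_{\mathrm{proxy}}(Y)$.

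The only step needing care is this last observation: the bound is uniform in the realized $Y$. It depends only on the set $\{x_1,\dots,x_m\}$, which is deterministic given $q$, and that is what makes the sampled slot's local rank independent of $Y$. If $q(x_{m+1})=q_{\mathrm{proxy}}(Y)$, the tie is resolved by the tie-breaking rule (left-to-right sibling order, with the sampled slot at position $m+1$), so the sampled token still precedes all fill tokens. No other obstacle is expected; the lemma is essentially an ordering argument.
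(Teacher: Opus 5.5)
Your proof is correct and follows the paper's argument: $q(x_m)\ge q(x_{m+1})$ by sorting, $z=\sum_{i>m}q(x_i)\ge q(x_{m+1})$ by nonnegativity, take the minimum, and then note that every fill token lies outside the top-$m$ set, so $q(X_j)\le q(x_{m+1})$. Your explicit handling of the $m=0$ convention (proxy $q(x_1)+\epsilon$) and of ties via left-to-right sibling order are refinements that the paper's proof leaves implicit.
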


\begin{proof}
By sorting, $q(x_{m+1})\le q(x_m)$. 
Moreover $z=\sum_{i=m+1}^{|\mathcal{V}|}q(x_i)\ge q(x_{m+1})$. 
Hence $q(x_{m+1})\le\min\{q(x_m),z\}=q_{\mathrm{proxy}}(Y)$. 
Since every fill token is drawn from $\mathcal{V}\setminus\{x_1,\dots,x_m,Y\}$, its proxy probability is at most $q(x_{m+1})$.
\end{proof}

Lemma~\ref{lem:local-rank} guarantees that, although the \emph{identities} of the fill tokens $X_j$ depend on the realized $Y$ (due to sampling without replacement), their \emph{scores} can never surpass the sampled token's proxy score. 
This fixes the sampled node at rank $m+1$ inside its sibling group.

The second lemma captures the ancestor-closed nature of global pruning.

\begin{lemma}[Ancestor monotonicity]
\label{lem:ancestor}
For any node $u$ in a unpruned dynamic tree $\mathcal{U}$, let $S(u)$ denote its path proxy score (the product of proxy probabilities along the root-to-$u$ path). 
If $v$ is a child of $u$, then $S(v)\le S(u)$.
\end{lemma}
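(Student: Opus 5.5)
The claim $S(v)\le S(u)$ follows from the multiplicative structure of path proxy scores, once every proxy probability is shown to lie in $[0,1]$. By definition, $S(v)=S(u)\cdot q_{\mathrm{proxy}}(v)$, where $q_{\mathrm{proxy}}(v)$ is the proxy probability attached to $v$ inside the candidate pool generated at its parent $u$. Since $S(u)\ge 0$, being a product of nonnegative numbers, it suffices to show $0\le q_{\mathrm{proxy}}(v)\le 1$ for every node type in the pool.

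We check the three cases from the setup.
\begin{itemize}
\item \emph{Top-$m$ tokens and fill tokens.} Here $q_{\mathrm{proxy}}=q(\cdot)$, which is a probability under the draft distribution $q$ and so lies in $[0,1]$.
\item \emph{Sampled token with $m\ge 1$.} Here $q_{\mathrm{proxy}}(Y)=\min\{q(x_m),z\}$. Both arguments are in $[0,1]$, since $z=1-\sum_{i\le m}q(x_i)$ is a residual mass. Hence so is their minimum.
\item \emph{Sampled token with $m=0$.} Here the proxy is $q(x_1)+\epsilon$. This is the one delicate case, since it can exceed $1$ only if $q(x_1)>1-\epsilon$.
\end{itemize}

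The boundary case $m=0$ can be handled in either of two ways.
\begin{itemize}
\item One can require $\epsilon\le 1-q(x_1)$, or define the proxy as $\min\{q(x_1)+\epsilon,1\}$. Neither change alters the ranking argument of Lemma~\ref{lem:local-rank}.
\item Alternatively, one can observe that only the relative ordering matters for pruning. Multiplying all proxies in a pool by a common positive constant preserves both the ranking and $\mathrm{Prune}_R$.
\end{itemize}
I would state the first convention explicitly. With it, $q_{\mathrm{proxy}}(v)\le 1$ in all cases, so $S(v)=S(u)\,q_{\mathrm{proxy}}(v)\le S(u)$.

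The only real obstacle is this $\epsilon$ normalization. Everything else is immediate from the definitions. Finally, combined with the depth-first tie-breaking rule, the inequality shows that whenever $v$ is retained by global top-$R$ pruning, its parent $u$ ranks at least as high and is retained too. This yields the ancestor-closedness asserted in the setup.
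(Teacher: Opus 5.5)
Your proof is correct and is exactly the paper's one-line argument, $S(v)=S(u)\,q_{\mathrm{proxy}}(v)$ with $q_{\mathrm{proxy}}(v)\le 1$; your caveat about the $m=0$ proxy $q(x_1)+\epsilon$ is a fair point that the paper's ``by construction'' skips (its own base case instead takes this proxy to be $z=1$, which respects the bound, as does your cap $\min\{q(x_1)+\epsilon,1\}$). Drop your second alternative, though: rescaling one pool's proxies by a constant rescales that parent's whole subtree against every other branch, so it keeps sibling order but not the per-depth top-$K$ expansion or global $\mathrm{Prune}_R$.
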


\begin{proof}
$S(v)=S(u)\cdot q_{\mathrm{proxy}}(v)$ and $q_{\mathrm{proxy}}(v)\le 1$ by construction.
\end{proof}

\subsection{Proof of Theorem~\ref{thm:lossless} via Equivalence Classes}
\label{app:global-lossless}

We now turn to the main result. 
For any unpruned tree $\mathcal{U}$ and any positive integer $R$, the pruned tree $T_R=\mathrm{Prune}_R(\mathcal{U})$ is a deterministic function of $\mathcal{U}$. 
We define the equivalence class
\begin{equation*}
    [\mathcal{U}]_R \;=\; \bigl\{\mathcal{U}'\mid \mathrm{Prune}_R(\mathcal{U}')=T_R\bigr\},
\end{equation*}
which collects all unpruned trees that prune to the same $R$-node tree. 
Let $\text{Rheo}([\mathcal{U}]_R)$ denote the random output sequence obtained by running Algorithm~\ref{alg:verification} layer-by-layer on $T_R$. The randomness of $\text{Rheo}([\mathcal{U}]_R)$ is fully from the internal coin flips of Algorithm~\ref{alg:verification}.

Our goal is to prove that, for any sequence $\mathbf{Seq}$ and any $R\ge 1$,
\begin{equation}
    \mathbb{E}_{[\mathcal{U}]_R}\Bigl[\Pr\bigl(\text{Rheo}([\mathcal{U}]_R)=\mathbf{Seq}\bigr)\Bigr]
    \;=\; p(\mathbf{Seq}),
    \label{eq:global-goal}
\end{equation}
where $p(\mathbf{Seq})$ is the probability assigned to $\mathbf{Seq}$ by standard autoregressive sampling from the target model.

\begin{theorem}[Losslessness]
\label{thm:global-lossless}
Equation~\eqref{eq:global-goal} holds for every positive integer $R$ and every sequence $\mathbf{Seq}$.
\end{theorem}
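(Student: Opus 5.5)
We prove the statement by induction on the length of $\mathbf{Seq}$. The work lies in turning the expectation over random pruned trees into a statement about a single layer. Write $\mathbf{Seq}=(y_1,\dots,y_L)$. It suffices to show, for the first layer, that
\[
\mathbb{E}_{[\mathcal{U}]_R}\bigl[\Pr(\text{first output}=y_1)\bigr]=p(y_1),
\]
and then to show that, conditionally on the first output being $y_1$, the remaining process is again a RheoSampling tree rooted at the child of $y_1$. That conditional tree is either an empty or truncated subtree, where verification falls back to plain sampling from $p(\cdot\mid y_1)$, or a subtree obtained by the same draft mechanism with a smaller effective budget. We then apply the inductive hypothesis with the prefix extended by $y_1$ and multiply the factors. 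Since the hypothesis must hold for every budget $R'\ge 1$, we should induct jointly over $(L,R)$, or over $L$ uniformly in $R$.

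For the first layer we partition the probability space using the equivalence classes. The randomness relevant at the root consists of the sampled root token $Y\sim\tilde q$, independent of the deeper randomness, and of what the subtrees look like. The key claim is that \emph{the event that the sampled root slot survives pruning, together with the set of surviving root-level candidates, depends on $Y$ only through its proxy.} Lemma~\ref{lem:local-rank} places $Y$ at rank $m+1$ regardless of its value, so its path score $q_{\mathrm{proxy}}(Y)$ is fixed. By Lemma~\ref{lem:ancestor} and the tie-breaking rule, a root-level node survives iff its score is among the top $R$, and every descendant of any node scores at most that node. The sampled root node competes only through this fixed score, so the survival indicator $\mathbb{1}\{s\neq -1\}$ is not a function of $Y$. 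Conditioning on survival therefore leaves $Y\sim\tilde q$.

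The case $s=-1$ does not come for free, and we handle it as follows. In that case the surviving root candidates are a prefix of the deterministic tokens, and we must check that the set of survivors is not influenced by $Y$ in a correlated way. For example, if $Y$ is pruned, the fill tokens $X_j$ rank below it and so are pruned too; hence the survivors are a subset of $\{x_1,\dots,x_m\}$, which is deterministic. Given the realized candidate set $\mathcal{C}$ with the correct law of $Y$, Algorithm~\ref{alg:verification} then reproduces $p$ exactly. If $s\neq -1$, the first step is standard speculative sampling of $Y\sim\tilde q$ against $p$. It returns $y$ with probability $\min(p(y),\tilde q(y))$ and otherwise passes to the residual $r$ with total mass $1-\sum\min(p,\tilde q)$. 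The subsequent recursive rejection over the deterministic candidates, ending with a fallback sample from the final residual, is a lossless sampler for $r$ for any fixed candidate set. The reason is that each step "accept $u_i$ w.p.\ $r(u_i)$, else zero it out and renormalize" is exact rejection against a point mass. The fill candidates are not fixed: they depend on $Y$. But after rejection the law of the next output is $r$ for every realization of the candidates, so averaging over $Y$ poses no problem. Summing gives $p(y)$. If $s=-1$, Algorithm~\ref{alg:verification} runs the same recursion starting from $p$, which is again exact.

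The main obstacle is the inductive step: the subtree below the accepted token $y_1$ is \emph{not} independent of the event that $y_1$ was accepted, nor of the shape of the rest of the tree, because global pruning couples siblings' subtrees through the shared budget $R$. To handle this, we would condition on the equivalence class restricted to everything outside the subtree of $y_1$. Given that information, the subtree of $y_1$ is exactly $\mathrm{Prune}_{R'}$ applied to an independently drafted RheoSampling tree rooted at $y_1$. This holds with effective threshold the $R$-th largest global score, rescaled by $S(y_1)$, and that threshold is measurable with respect to the conditioning. We would also note that the verifier's coin at the root is independent of the draft randomness below. The delicate point is that the threshold itself may depend on the subtree's own scores. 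We would handle this by observing that pruning by top-$R$ is equivalent to pruning by a score threshold, with ties resolved deterministically. The inner-subtree argument above uses only score-based survival and the fixed proxy rank, so it goes through for any threshold, even a random one. This is exactly what averaging over $[\mathcal{U}]_R$ in \eqref{eq:global-goal} accomplishes.
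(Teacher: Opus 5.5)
Your first-layer argument is essentially right. Survival of the root's sampled slot does not depend on its realized value, and the residual recursion outputs $r$ for every realized candidate set, so the fact that the fill tokens depend on $Y$ is harmless. (Your sentence claiming that the whole set of surviving root candidates depends on $Y$ only through its proxy is false, since fill identities and their survival do depend on $Y$, but you do not need it.)

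The gap is the inductive step across layers. You condition on the pruned configuration outside the subtree of $y_1$ and assert that, given it, that subtree is an unbiased RheoSampling tree pruned at some threshold, with the inner argument valid for any random threshold. This fails because pruning ranks nodes by score across branches, not by depth. A sampled value deep inside the subtree of $y_1$ can therefore decide whether a shallow node outside it survives. Here is a concrete instance:
\begin{itemize}
\item Take $m=1$ and $q(x_1)=0.9$ at the root, with the tail mass $0.1$ spread thinly. The root's sampled slot $Y$ then has score $0.1$, and root fill nodes score at most $0.02$.
\item Under $x_1$, let the top child have draft probability $0.6$ (score $0.54$, with a flat continuation so its children score below $0.1$), and let the tail be thin. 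The sampled child $Y'$ then has proxy $0.4$ and score $0.36$, and the fill children under $x_1$ score below $0.1$.
\item If the realized value $W$ of $Y'$ has a peaked continuation ($q_{\mathrm{top}}=0.9$), its top child scores $0.324$. If it has a flat one ($q_{\mathrm{top}}=0.2$), its children score $0.072$.
\item With $R=4$, $Y$ survives if and only if $W$ is of flat type.
\end{itemize}
So conditioning on the outside configuration (in particular on $s\neq -1$ at the root) restricts $W$ to flat-type values. The layer-2 verification below $x_1$ then tests $W$ against a tail distribution that is no longer its law. The random-threshold remark cannot help, since here the threshold is a function of the very sampled value whose law you need. (Separately, the unpruned subtree under $y_1$ is not produced by the same mechanism, because expansion takes the global top-$K$ per depth across all branches. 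Your inductive hypothesis would therefore not literally apply to it.)

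The theorem is still true, but through a cancellation that conditioning on the outside configuration destroys. For a top-$m$ token $x_i$ one has $\tilde q(x_i)=0$, so with $Z=\sum_w\max(0,\tilde q(w)-p(w))$,
\[
\sum_v \tilde q(v)\Bigl(1-\min\bigl(1,p(v)/\tilde q(v)\bigr)\Bigr)\,r(x_i)\;=\;Z\cdot\frac{p(x_i)}{Z}\;=\;p(x_i).
\]
That is, the probability of descending into $x_i$ is $p(x_i)$ whether or not $Y$ survived, provided the root value $v$ is integrated out \emph{before} anything depending on $W$ is fixed. In general, sampled values must be integrated in reverse order of global rank, not by depth.

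This is exactly what the paper's induction on $R$ supplies:
\begin{itemize}
\item It reveals nodes in rank order.
\item It shows that, given $T_k$, the $(k+1)$-th slot is determined, and if that slot is sampled its value is $\tilde q$-distributed given $T_k$ (a node's value only affects nodes ranked after it).
\item The siblings already in $T_k$ of a newly added sampled node are top-$m$ tokens, because fill siblings rank below it. The identity above then shows the addition does not change the law of the verified output.
\item Deterministic additions are just extra point masses in the sequential rejection.
\end{itemize}
To close your gap, you would have to replace the subtree-conditioning step by this rank-ordered (martingale in $R$) argument or an equivalent one.
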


\begin{proof}
We proceed by mathematical induction on $R$.

\paragraph{Base case ($R=1$).} 
The pruned tree $T_1$ contains exactly one node, which must reside at depth $1$ by Lemma~\ref{lem:ancestor}. 
Two sub-cases arise:

\emph{Case $m=0$.} The retained node is the sampled token. 
Its proxy probability equals the full residual mass $z=1$, so it outranks every deterministic candidate. 
Consequently, Algorithm~\ref{alg:verification} performs standard rejection sampling with proposal $\tilde{q}$ and target $p$. The losslessness is trivial.

\emph{Case $m>0$.} The retained node is the top-$1$ deterministic candidate. 
Algorithm~\ref{alg:verification} sets $s=-1$ and $r\leftarrow p$, so the output is sampled directly from $p$.

In both cases, the expectation $\mathbb{E}_{[\mathcal{U}]_1}\left[\Pr(\text{Rheo}([\mathcal{U}]_1)=\mathbf{Seq})\right] = p(\mathbf{Seq})$.

\paragraph{Inductive hypothesis.} 
Assume that for $R = k$ and all sequences $\mathbf{Seq}$,
\begin{equation}
    \mathbb{E}_{[\mathcal{U}]_R}\Bigl[\Pr\bigl(\text{Rheo}([\mathcal{U}]_R)=\mathbf{Seq}\bigr)\Bigr]
    \;=\; p(\mathbf{Seq}).
    \label{eq:induction-hypothesis}
\end{equation}

\paragraph{Inductive step ($R=k+1$).} 
Fix an arbitrary equivalence class $[\mathcal{U}]_k$ and consider all unpruned trees $\mathcal{U}'\in[\mathcal{U}]_k$. 
For each such $\mathcal{U}'$, let $u'$ be the unique node in $\mathrm{Prune}_{k+1}(\mathcal{U}')\setminus\mathrm{Prune}_k(\mathcal{U}')$ (the newly admitted $(k+1)$-th node). 
The following structural fact is essential and its proof is deferred to Appendix~\ref{app:slot-proof}.

\begin{lemma}[Slot determinism]
\label{lem:slot}
For any equivalence class $[\mathcal{U}]_k$, if the node $u'=\mathrm{Prune}_{k+1}(\mathcal{U}')\setminus\mathrm{Prune}_k(\mathcal{U}')$ exists for some $\mathcal{U}'\in[\mathcal{U}]_k$, then
\begin{enumerate}
    \item \textbf{(Topological invariance)} The slot of $u'$ are identical across all $\mathcal{U}'\in[\mathcal{U}]_k$, which means that $\mathrm{Prune}_{k+1}(\mathcal{U}')$ have the same topological structure for all $\mathcal{U}'\in[\mathcal{U}]_k$.
    \item \textbf{(Deterministic consistency)} If the slot type of $u'$ is deterministic (top-$m$ slot or fill slot), then $\mathrm{Prune}_{k+1}(\mathcal{U}')$ is identical for every $\mathcal{U}'\in[\mathcal{U}]_k$, which means $[\mathcal{U}]_{k+1}\subset [\mathcal{U}]_k$.
    \item \textbf{(Stochastic fidelity)} If the slot type of $u'$ is sampled, then $[\mathcal{U}]_k = \bigsqcup_{v}\, [\mathcal{U}]_k^{(v)}$ are partitioned into disjoint subclasses according to the realized token $v$ of $u'$, and the proportion of each subclass equals $\tilde{q}(v)$.
\end{enumerate}
\end{lemma}

By Lemma~\ref{lem:ancestor} and the tie-breaking rule, $u'$ is a leaf in $T_{k+1}=\mathrm{Prune}_{k+1}(\mathcal{U}')$, so it affects only the single layer $\ell$ where it resides. 
Let $T_k=\mathrm{Prune}_k(\mathcal{U}')$ (which is constant for all $\mathcal{U}'\in[\mathcal{U}]_k$ by definition) and let $C$ be the candidate set of $T_k$ at layer $\ell$.

We now split the analysis according to the slot type of $u'$.

\emph{Case 1: deterministic slot (top-$m$ or fill).} 
By Lemma~\ref{lem:slot}(2), the pruned tree $T_{k+1}$ is identical for every $\mathcal{U}'\in[\mathcal{U}]_k$. 
During RheoSampling, the verification on $T_k$ and $T_{k+1}$ proceeds identically until layer $\ell$. 
Algorithm~\ref{alg:verification} processes $u'$ as an additional point mass in the sequential-rejection pool. Whether $u'$ is accepted directly or bypassed, the total probability mass allocated to each token remains governed by the target distribution $r$ (or $p$). The presence of $u'$ alters only the acceptance length, not the marginal output distribution. 
Hence, for every $\mathcal{U}'\in[\mathcal{U}]_k$,
\begin{equation}
    \Pr\bigl(\text{Rheo}([\mathcal{U}']_{k+1})=\mathbf{Seq}\bigr)
    \;=\;
    \Pr\bigl(\text{Rheo}([\mathcal{U}]_k)=\mathbf{Seq}\bigr).
    \label{eq:case1-equation}
\end{equation}

\emph{Case 2: sampled slot.} 
By Lemma~\ref{lem:slot}(3), the equivalence class $[\mathcal{U}]_k$ is partitioned into subclasses $[\mathcal{U}]_k^{(v)}$ indexed by the realized token $v$ of $u'$, with $\Pr(\mathcal{U}'\in[\mathcal{U}]_k^{(v)})=\tilde{q}(v)$. 
For each $v$, the pruned tree is $T_{k+1}^{(v)}=T_k\cup\{u'_v\}$, where $u'_v$ denotes the sampled node filled with token $v$. 
At layer $\ell$, $T_k$ has $s=-1$ (the sampled slot was previously empty), while $T_{k+1}^{(v)}$ has $s\neq -1$ pointing to $u'_v$ with true sampling probability $\tilde{q}(v)$.

Algorithm~\ref{alg:verification} first tests $u'_v$ with acceptance probability $\min(1,p(v)/\tilde{q}(v))$. 
If accepted, the output at layer $\ell$ is $v$; if rejected, the residual distribution
\begin{equation*}
    r \;=\; \mathrm{norm}\bigl(\max(0,\,p-\tilde{q})\bigr)
\end{equation*}
is verified over the deterministic candidate set $C$. 
Let $h_{\mathrm{res}}(C,v_\ell)$ denote the probability that Algorithm~\ref{alg:verification}, operating on the deterministic set $C$ under target distribution $r$, outputs token $v_\ell$ (the token appearing in $\mathbf{Seq}$ at layer $\ell$). 
A standard sequential-rejection argument shows $h_{\mathrm{res}}(C,v_\ell)=r(v_\ell)$.

Averaging over the realized token $v\sim\tilde{q}$, the probability of emitting $v_\ell$ at layer $\ell$ is
\begin{align}
    &\sum_{v}\tilde{q}(v)\Bigl[\mathbb{I}(v=v_\ell)\min\!\Big(1,\frac{p(v)}{\tilde{q}(v)}\Big)
    \;+\;
    \Bigl(1-\min\Big(1,\frac{p(v)}{\tilde{q}(v)}\Big)\Bigr)\,h_{\mathrm{res}}(C,v_\ell)\Bigr] \notag\\
    &=\; \min\bigl(p(v_\ell),\tilde{q}(v_\ell)\bigr)
    \;+\;
    h_{\mathrm{res}}(C,v_\ell)\sum_{v}\max\bigl(0,\tilde{q}(v)-p(v)\bigr) \notag\\
    &=\; \min\bigl(p(v_\ell),\tilde{q}(v_\ell)\bigr)
    \;+\;
    \frac{\max\bigl(0,p(v_\ell)-\tilde{q}(v_\ell)\bigr)}{\sum_t\max(0,p(t)-\tilde{q}(t))}
    \cdot
    \sum_{v}\max\bigl(0,p(v)-\tilde{q}(v)\bigr) \notag\\
    &=\; \min\bigl(p(v_\ell),\tilde{q}(v_\ell)\bigr)
    \;+\;
    \max\bigl(0,p(v_\ell)-\tilde{q}(v_\ell)\bigr)
    \;=\; p(v_\ell).
    \label{eq:sample-expectation}
\end{align}
Meanwhile, $T_k$ at layer $\ell$ (with $s=-1$ and $r\leftarrow p$) also emits $v_\ell$ with probability $p(v_\ell)$. 
Since $u'_v$ is a leaf and does not affect any other layer, we can obtain
\begin{equation}
    \mathbb{E}_{\mathcal{U}'\in[\mathcal{U}]_k}\Bigl[\Pr\bigl(\text{Rheo}([\mathcal{U}']_{k+1})=\mathbf{Seq}\bigr)\Bigr]
    \;=\;
    \Pr\bigl(\text{Rheo}([\mathcal{U}]_k)=\mathbf{Seq}\bigr).
    \label{eq:case2-expectation}
\end{equation}

\paragraph{Completing the induction.} 
Combining Eq.~\eqref{eq:case1-equation} of Case 1 and Eq.~\eqref{eq:case2-expectation} of Case 2, for every equivalence class $[\mathcal{U}]_k$ we have
\begin{equation}
    \mathbb{E}_{\mathcal{U}'\in[\mathcal{U}]_k}\Bigl[\Pr\bigl(\text{Rheo}([\mathcal{U}']_{k+1})=\mathbf{Seq}\bigr)\Bigr]
    \;=\;
    \Pr\bigl(\text{Rheo}([\mathcal{U}]_k)=\mathbf{Seq}\bigr).
\end{equation}
Taking expectation over all $[\mathcal{U}]_k$,
\begin{align}
    \mathbb{E}_{[\mathcal{U}]_{k+1}}\Bigl[\Pr\bigl(\text{Rheo}([\mathcal{U}]_{k+1})=\mathbf{Seq}\bigr)\Bigr]
    &=\;
    \mathbb{E}_{[\mathcal{U}]_k}\Bigl[\mathbb{E}_{\mathcal{U}'\in[\mathcal{U}]_k}\bigl[\Pr(\text{Rheo}([\mathcal{U}']_{k+1})=\mathbf{Seq})\bigr]\Bigr] \notag\\
    &=\;
    \mathbb{E}_{[\mathcal{U}]_k}\Bigl[\Pr\bigl(\text{Rheo}([\mathcal{U}]_k)=\mathbf{Seq}\bigr)\Bigr] \notag\\
    &=\; p(\mathbf{Seq}),
\end{align}
where the last equality is the induction hypothesis~\eqref{eq:induction-hypothesis}. 
Thus Eq.~\eqref{eq:global-goal} holds for $R=k+1$.

By mathematical induction, Theorem~\ref{thm:global-lossless} holds for every positive integer $R$. Further, Theorem~\ref{thm:lossless} holds and RheoSaampling is lossless.
\end{proof}

\subsection{Deferred Proof of Lemma~\ref{lem:slot} (Slot Determinism)}
\label{app:slot-proof}

\begin{proof}[Proof of Lemma~\ref{lem:slot}]

We prove the three statements in order.

\paragraph{Step 1: Existence of the $(k+1)$-th slot in every $\mathcal{U}\in[\mathcal{U}]_k$.}
Fix $\mathcal{U}'\in[\mathcal{U}]_k$ and let $u'=\mathrm{Prune}_{k+1}(\mathcal{U}')\setminus\mathrm{Prune}_k(\mathcal{U}')$. 
Let $P'$ be the parent of $u'$ and we have $P'\in\mathrm{Prune}_k(\mathcal{U}')=T_k$. 
Because every $\mathcal{U}\in[\mathcal{U}]_k$ shares the same $T_k$, it follows that $P'\in\mathrm{Prune}_k(\mathcal{U})$ for all $\mathcal{U}\in[\mathcal{U}]_k$.

We claim that $P'$ is expanded in every $\mathcal{U}\in[\mathcal{U}]_k$. 
Suppose not: there exists some $\mathcal{U}\in[\mathcal{U}]_k$ in which $P'$ is \emph{not expanded}. 
At depth $\ell=\mathrm{depth}(P')$, the tree-construction policy expands exactly the $K$ nodes with highest path scores. 
Since $P'$ is expanded in $\mathcal{U}'$ (because its child $u'$ exists), it belongs to the top-$K$ at depth $\ell$ in $\mathcal{U}'$. 
Since $P'$ is \emph{not expanded} in $\mathcal{U}$, there must be $K$ other nodes at depth $\ell$ in $\mathcal{U}$ which are higher priority than $P'$. 
Because $P'\in T_k$ and $T_k$ collects the globally highest $k$ nodes in $\mathcal{U}$, any node with higher priority than $P'$ must also belong to $T_k$. 
Thus these $K$ nodes all lie in $T_k=\mathrm{Prune}_k(\mathcal{U}')$, and therefore appear at depth $\ell$ in $\mathcal{U}'$ as well, with the same higher priority. 
Consequently, in $\mathcal{U}'$ the node $P'$ is ranked below at least $K$ nodes at depth $\ell$ and cannot be expanded. This creates a contradiction since $P'$ has a child $u'$ in $\mathcal{U}'$. 
Hence $P'$ is expanded in every $\mathcal{U}\in[\mathcal{U}]_k$, and the child slot occupied by $u'$ exists in all of them.

\paragraph{Step 2: Score invariance and uniqueness.}
Fix any $\mathcal{U}\in[\mathcal{U}]_k$, let $u$ be its $(k+1)$-th node, with parent $P\in T_k$. 
Because $P\in T_k$, its path score $S_{T_k}(P)$ is frozen by $T_k$ and is therefore identical across $[\mathcal{U}]_k$. Let ${\rm slot}(u)$ be the corresponding slot of $u$ in $\mathcal{U}$, then we inspect the three possible slot types of ${\rm slot}(u)$ to prove its path score $S_{\mathcal{U}}({\rm slot}(u))$ is also frozen by $T_k$:

\emph{Top-$m$ slot.} 
The token $x_i\rightarrow {\rm slot}(u)$ and its draft probability $q(x_i)$ are deterministic functions of the parent. 
Hence the slot score $S_{\mathcal{U}}({\rm slot}(u))=S_{T_k}(P)\cdot q(x_i)$ is a constant over $[\mathcal{U}]_k$.

\emph{Sampled slot.} 
The proxy probability $q_{\mathrm{proxy}}(Y)=\min\{q(x_m),z\}$ depends only on the top-$m$ probabilities and the residual mass, all of which are fixed before sampling. 
Thus the slot score $S_{\mathcal{U}}({\rm slot}(u))=S_{T_k}(P)\cdot q_{\mathrm{proxy}}(Y)$ is likewise constant over all $\mathcal{U}\in[\mathcal{U}]_k$, independent of the realized token $Y$.

\emph{Fill slot.} 
By Lemma~\ref{lem:local-rank}, the sampled sibling always has proxy score at least $q(x_{m+1})$, dominating every fill token including $u$. 
Therefore the sampled sibling of $u$ must already belong to $T_k$, which freezes its realized token and hence the remaining vocabulary is deterministic.
Consequently the fill token and its slot score $S_{\mathcal{U}}({\rm slot}(u))$ are identical across the whole equivalence class $\mathcal{U}\in[\mathcal{U}]_k$.

Now fix $\mathcal{U},\mathcal{U}'\in[\mathcal{U}]_k$ and let $u,u'$ be their respective $(k+1)$-th nodes. 
By Step~1, the slot of $u$ exists in $\mathcal{U}'$ and the slot of $u'$ exists in $\mathcal{U}$. 
Let $\sigma$ and $\sigma'$ denote these two slots, respectively. Then we create a particular tree $\mathcal{T}=T_k\cup\{\sigma,\sigma'\}$ with tokens $u\rightarrow\sigma$ and $u'\rightarrow\sigma'$. 
In $\mathcal{U}$, since $u$ is the $(k+1)$-th node, slot $\sigma'$ cannot outrank $\sigma$; hence path score $S_{\mathcal{U}}(\sigma')\le S_{\mathcal{U}}(\sigma)$ and then $S_{\mathcal{T}}(\sigma')\le S_{\mathcal{T}}(\sigma)$. 
Symmetrically, in $\mathcal{U}'$, we have $S_{\mathcal{U}'}(\sigma)\le S_{\mathcal{U}'}(\sigma')$ and then $S_{\mathcal{T}}(\sigma)\le S_{\mathcal{T}}(\sigma')$. 
Therefore $S_{\mathcal{T}}(\sigma') = S_{\mathcal{T}}(\sigma)$ in tree $\mathcal{T}$. 
Because the global tie-breaking rule (depth first, then left-to-right) is deterministic and fixed, the same slot wins the $(k+1)$-th rank in these three trees $\mathcal{U},\mathcal{U}'$ and $\mathcal{T}$. 
Hence $\sigma=\sigma'$, establishing topological invariance~(1).

\paragraph{Step 3: Deterministic consistency (2).}
If the slot type of $u'$ is top-$m$ or fill, Step~2 shows that its token and score are both frozen by $T_k$. 
Hence $\mathrm{Prune}_{k+1}(\mathcal{U}')$ is identical for every $\mathcal{U}'\in[\mathcal{U}]_k$.

\paragraph{Step 4: Stochastic fidelity (3).}
If the slot of $u'$ is sampled, its proxy score is constant (Step~2), while the realized token $Y$ is drawn from $\tilde{q}$ during tree construction. 
Because the survival of this slot in the global top-$(k+1)$ depends only on the constant proxy score and not on realized sample token $u'$, the equivalence class $[\mathcal{U}]_k$ is partitioned into subclasses $[\mathcal{U}]_k^{(v)}=\{\mathcal{U}'\in[\mathcal{U}]_k\mid u'=v\}$ whose proportions are exactly $\tilde{q}(v)$.
\end{proof}

\section{The Superiority of RheoVerification}

\subsection{Single-layer Rheo-acceptance Rate (Proof of Theorem~\ref{thm:rheo-acc})}
\label{app:acceptance}

\textbf{Setup.} Let $x_1,x_2,\dots$ be the vocabulary sorted by draft probability $q$ in descending order.
A candidate pool of size $n$ consists of:
\begin{itemize}
    \item top-$m$ deterministic tokens $\{x_1,\dots,x_m\}$;
    \item one sampled token $Y\sim\tilde{q}$, where $\tilde{q}(v)=q(v)/z$ for $v\notin\{x_1,\dots,x_m\}$ and $z=1-\sum_{i=1}^m q(x_i)$;
    \item $n-m-1$ fill tokens, i.e. the highest-$q$ tokens from $\mathcal{V}\setminus\{x_1,\dots,x_m,Y\}$.
\end{itemize}

\textbf{Always-present tokens.}
The first $n-1$ tokens $\mathrm{Top}_{n-1}=\{x_1,\dots,x_{n-1}\}$ appear in \emph{every} realization of the pool:
\begin{itemize}
    \item $x_1,\dots,x_m$ are deterministic;
    \item each $x_j$ ($m+1\le j\le n-1$) is either the sampled token (if $Y=x_j$) or a fill token (if $Y\neq x_j$).
\end{itemize}

\textbf{Output probability of $v\in\mathrm{Top}_{n-1}$.}
For any $v\in\mathrm{Top}_{n-1}$, Algorithm~\ref{alg:verification} outputs $v$ via two paths:
\begin{align}
    \Pr[\text{output}=v]
    &=\underbrace{\tilde{q}(v)\min\Bigl(1,\frac{p(v)}{\tilde{q}(v)}\Bigr)}_{Y=v\text{, direct accept}}
    \;+\;
    \underbrace{r(v)\sum_{t\neq v}\tilde{q}(t)\max\Bigl(0,1-\frac{p(t)}{\tilde{q}(t)}\Bigr)}_{Y\neq v\text{, reject }Y\text{ then hit }v\text{ under }r} \notag\\
    &=\min\bigl(p(v),\tilde{q}(v)\bigr)
    \;+\;
    \frac{\max\bigl(0,p(v)-\tilde{q}(v)\bigr)}{Z}\cdot\bigl(Z-\max(0,\tilde{q}(v)-p(v))\bigr) \notag\\
    &=p(v),
    \label{eq:top-n-1-exact}
\end{align}
where $Z=\sum_w\max(0,\tilde{q}(w)-p(w))=\sum_w\max(0,p(w)-\tilde{q}(w))$ and $r=\mathrm{norm}(\max(0,p-\tilde{q}))$.
The last equality follows from the identity $\min(a,b)+\max(0,a-b)=a$.

Summing Eq.~\eqref{eq:top-n-1-exact} over $v\in\mathrm{Top}_{n-1}$ yields the first term of Theorem~\ref{thm:rheo-acc}.

\textbf{Output probability of $v\notin\mathrm{Top}_{n-1}$.}
For such $v$, two cases arise according to whether $v=x_n$.

\emph{Case 3a: $v=x_n$.}
This token enters the pool \emph{iff} $Y\in\{x_{m+1},\dots,x_{n-1}\}$ (as the last fill token), otherwise it is absent.
Its output probability therefore decomposes as
\begin{align}
    \Pr[\text{output}=x_n]
    &=\underbrace{\tilde{q}(x_n)\min\Bigl(1,\frac{p(x_n)}{\tilde{q}(x_n)}\Bigr)}_{\text{sampled path}}
    \;+\;
    \underbrace{r(x_n)\sum_{j=m+1}^{n-1}\tilde{q}(x_j)\max\Bigl(0,1-\frac{p(x_j)}{\tilde{q}(x_j)}\Bigr)}_{\text{fill path}} \notag\\
    &=\min\bigl(p(x_n),\tilde{q}(x_n)\bigr)
    \;+\;
    r(x_n)\sum_{v\in\mathrm{Top}_{n-1}}\max\bigl(0,\tilde{q}(v)-p(v)\bigr).
    \label{eq:xn-contrib}
\end{align}

\emph{Case 3b: $v\notin\mathrm{Top}_n$.}
Such a token can only appear as the sampled token $Y=v$, and it never survives into the fill set because the fill slots are exhausted by $\mathrm{Top}_{n-1}\cup\{x_n\}\setminus\{Y\}$.
Hence
\begin{equation}
    \Pr[\text{output}=v]=\tilde{q}(v)\min \Bigl(1,\frac{p(v)}{\tilde{q}(v)}\Bigr)=\min \bigl(p(v),\tilde{q}(v)\bigr).
    \label{eq:tail-contrib}
\end{equation}

\textbf{Summation.}
Adding Eqs.~\eqref{eq:top-n-1-exact}, \eqref{eq:xn-contrib}, and \eqref{eq:tail-contrib} over their respective token sets gives
\begin{equation}
    \mathcal{A}_{\text{Rheo}}
    \;=\;
    \sum_{v\in\mathrm{Top}_{n-1}}p(v)
    \;+\;
    \sum_{v\notin\mathrm{Top}_{n-1}}\min\bigl(p(v),\tilde{q}(v)\bigr)
    \;+\;
    r(x_n)\sum_{v\in\mathrm{Top}_{n-1}}\max\bigl(0,\tilde{q}(v)-p(v)\bigr),
\end{equation}
which establishes the equality of \eqref{eq:rheo-acc}. The inequality of \eqref{eq:rheo-acc} is trivial since
\[\sum_{v\in\mathrm{Top}_{n-1}}\max\bigl(0,\tilde{q}(v)-p(v)\bigr) \le Z \;\;\;\;\text{and}\;\;\;\;p(x_n)=\min\bigl(p(x_n),\tilde{q}(x_n)\bigr)+
r(x_n)\cdot Z.\]
\hfill$\square$


\subsection{Single-layer RRSw-acceptance Rate}
\label{app:rrsw}

\textbf{Setup (RRSw baseline).}
The candidate pool is identical to that in Theorem~\ref{thm:rheo-acc}, but the verification order is changed. The sampled token $Y\sim\tilde{q}$ is evaluated \emph{after} the deterministic top-$m$ tokens and \emph{before} fill tokens.

Let $\tilde{p}$ be the renormalized restriction of $p$ to the complement 
of $\mathrm{Top}_m=\{x_1,\dots,x_m\}$. 
The verifier first performs sequential rejection over $\mathrm{Top}_m$ 
under $p$; if all are rejected, it tests $Y$ under $\tilde{p}$; 
if $Y$ is also rejected, the remaining fill token $x_n$ is tested 
under $r_{\rm RRSw} = \operatorname{norm}(\max(0,\tilde{p} - \tilde{q}))$.
Trivially, sequential rejection over $\{x_1,\dots,x_m\}$ under the original target $p$ recovers
\begin{equation}\label{eq:topm-acc}
    A:=\sum_{i=1}^m p(x_i) = \sum\limits_{v\in{\rm Top}_{m}} p(v).
\end{equation}

\textbf{Always-present tail tokens.}
For each $v\in\{x_{m+1},\dots,x_{n-1}\}$, two paths lead to acceptance:
\begin{align}
    \Pr[\text{output}=v]
    &=(1-A)\cdot\left[\underbrace{\tilde{q}(v)\min\Bigl(1,\frac{\tilde p(v)}{\tilde{q}(v)}\Bigr)}_{Y=v\text{, direct accept}}
    \;+\;
    \underbrace{r_{\rm RRSw}(v)\sum_{t\neq v}\tilde{q}(t)\max\Bigl(0,1-\frac{\tilde p(t)}{\tilde{q}(t)}\Bigr)}_{Y\neq v\text{, reject }Y\text{ then hit }v\text{ under }r_{\rm RRSw}} \right] \notag\\
    &= (1-A) \left[\min\!\bigl(\tilde p(v),\tilde{q}(v)\bigr)
    +
    \frac{\max\!\bigl(0,\tilde p(v)-\tilde{q}(v)\bigr)}{\tilde Z} \bigl(\tilde Z-\max\!(0,\tilde{q}(v)-\tilde p(v))\bigr)\right] \notag\\
    &=(1-A)\cdot\tilde p(v) = p(v),
    \label{eq:rrsw-top-n-1}
\end{align}
where $\tilde Z=\sum_w\max(0,\tilde{q}(w)-\tilde p(w))=\sum_w\max(0,\tilde p(w)-\tilde{q}(w))$.

Together with Eq.~\eqref{eq:topm-acc}, the first $n-1$ tokens contribute $\sum_{v\in\mathrm{Top}_{n-1}}p(v)$.

\textbf{Output probability of $v\notin\mathrm{Top}_{n-1}$.} For such $v$, two cases arise according to whether $v=x_n$.

\emph{Case 3a: $v=x_n$.}
This token enters the pool iff $Y\in\{x_{m+1},\dots,x_{n-1}\}$ (as the last fill token). 
Conditioned on the prefix $\mathrm{Top}_m$ being rejected (probability $1-A$), its output probability therefore decomposes as
\begin{align}
    \Pr[\text{output}=x_n]
    &=(1-A)\left[\underbrace{\min\!\Bigl(\tilde{q}(x_n),\tilde p(x_n)\Bigr)}_{\text{sampled path}}
    +
    \underbrace{r_{\rm RRSw}(x_n)\!\sum_{j=m+1}^{n-1}\tilde{q}(x_j)\max\!\Bigl(0,1-\frac{\tilde p(x_j)}{\tilde{q}(x_j)}\Bigr)}_{\text{fill path}}\right] \notag\\
    &= (1-A) \left[\min\!\bigl(\tilde p(x_n), \tilde{q}(x_n)\bigr)
    \;+\;
    r_{\rm RRSw}(x_n)\!\sum_{v\in\mathrm{Top}_{n-1}}\max\!\bigl(0,\tilde{q}(v)-\tilde p(v)\bigr)\right].
    \label{eq:rrsw-xn-case3a}
\end{align}

\emph{Case 3b: $v\notin\mathrm{Top}_n$.}
Such a token can only appear as the sampled token $Y=v$. 
Therefore
\begin{equation}
    \Pr[\text{output}=v]=(1-A)\tilde{q}(v)\min\Bigl(1,\frac{\tilde{p}(v)}{\tilde{q}(v)}\Bigr) = (1-A)\min\bigl(\tilde p(v),\tilde{q}(v)\bigr).
    \label{eq:rrsw-xn-case3b}
\end{equation}

\textbf{Summation.}
Adding the contributions from Eqs.~\eqref{eq:topm-acc}, \eqref{eq:rrsw-top-n-1}, \eqref{eq:rrsw-xn-case3a} and \eqref{eq:rrsw-xn-case3b} yields
\begin{align}
    \mathcal{A}_{\rm RRSw}
    = &\sum_{v\in\mathrm{Top}_{n-1}}p(v)
    + \sum_{v\notin\mathrm{Top}_{n-1}}\min\bigl(p(v),(1-A)\tilde{q}(v)\bigr) \notag \\
    &~+ (1-A)\cdot r_{\rm RRSw}(x_n)\sum_{v\in\mathrm{Top}_{n-1}}\max\bigl(0,\tilde{q}(v)-\tilde p(v)\bigr),
    \label{eq:rrsw-acc}
\end{align}
where $r_{\rm RRSw}=\operatorname{norm}(\max(0,\tilde{p}-\tilde{q}))$ and $A=\sum\limits_{v\in{\rm Top}_{m}} p(v)$.

\subsection{Approximate Comparison: Rheo vs. RRSw}
\label{app:comparison}

\textbf{Approximation assumption.}
In both acceptance-rate formulas, the margin-token terms involve a finite sum over $\mathrm{Top}_{n-1}$:
\begin{equation*}
    S_{\rm Rheo}=\sum_{v\in\mathrm{Top}_{n-1}}\max\!\bigl(0,\tilde{q}(v)-p(v)\bigr),
    \qquad
    S_{\rm RRSw}=\sum_{v\in\mathrm{Top}_{n-1}}\max\!\bigl(0,\tilde{q}(v)-\tilde{p}(v)\bigr).
\end{equation*}
Because $\tilde{q}$ is dominated by its top-ranked entries (it is a renormalized tail of the draft distribution), the omitted tail contribution is negligible. We therefore approximate
\begin{equation}
    S_{\rm Rheo}\approx Z:=\sum_{v}\max\!\bigl(0,\tilde{q}(v)-p(v)\bigr),
    \qquad
    S_{\rm RRSw}\approx \tilde{Z}:=\sum_{v}\max\!\bigl(0,\tilde{q}(v)-\tilde{p}(v)\bigr).
    \label{eq:approx}
\end{equation}

\textbf{Case 3a under approximation.}
Plugging Eq.~\eqref{eq:approx} into the margin-token terms:
\begin{align*}
    \text{Rheo:}&\quad \min\!\bigl(p(x_n),\tilde{q}(x_n)\bigr)+r(x_n)S_{\rm Rheo}\\
    &\;\approx\; \min\!\bigl(p(x_n),\tilde{q}(x_n)\bigr)+\max\!\bigl(0,p(x_n)-\tilde{q}(x_n)\bigr)
    \;=\;p(x_n),\\[4pt]
    \text{RRSw:}&\quad
    (1-A) \Bigl[\min\!\bigl(\tilde{p}(x_n),\tilde{q}(x_n)\bigr)+r_{\rm RRSw}(x_n)S_{\rm RRSw}\Bigr]\\
    &\;\approx\;
    (1-A) \Bigl[\min\!\bigl(\tilde{p}(x_n),\tilde{q}(x_n)\bigr)+\max\!\bigl(0,\tilde{p}(x_n)-\tilde{q}(x_n)\bigr)\Bigr]
    \;=\;p(x_n).
\end{align*}
Thus, under the approximation, the margin-token contributions are equal.

\textbf{Case 3b (strict inequality).}
For every $v\notin\mathrm{Top}_n$, the tail-overlap terms satisfy
\begin{equation*}
    \min\!\bigl(p(v),\tilde{q}(v)\bigr)
    \;\ge\;
    \min\!\bigl(p(v),(1-A)\tilde{q}(v)\bigr),
\end{equation*}
with strict inequality whenever $(1-A)\tilde{q}(v)<p(v)<\tilde{q}(v)$.

\textbf{Net difference.}
Collecting the two cases, the approximate acceptance rates become
\begin{align*}
    \mathcal{A}_{\text{Rheo}} &\approx \sum_{v\in\mathrm{Top}_{n-1}}p(v)+p(x_n)+\sum_{v\notin\mathrm{Top}_n}\min\!\bigl(p(v),\tilde{q}(v)\bigr),\\
    \mathcal{A}_{\rm RRSw} &\approx \sum_{v\in\mathrm{Top}_{n-1}}p(v)+p(x_n)+\sum_{v\notin\mathrm{Top}_n}\min\!\bigl(p(v),(1-A)\tilde{q}(v)\bigr).
\end{align*}
Hence
\begin{equation*}
    \mathcal{A}_{\text{Rheo}}-\mathcal{A}_{\rm RRSw}
    \;\approx\;
    \sum_{v\notin\mathrm{Top}_n}\Bigl[\min\!\bigl(p(v),\tilde{q}(v)\bigr)-\min\!\bigl(p(v),(1-A)\tilde{q}(v)\bigr)\Bigr]
    \;\ge\;0.
\end{equation*}
The gap is strictly positive as soon as there exists any tail token $v$ with $(1-A)\tilde{q}(v)<p(v)<\tilde{q}(v)$. In other words, \textbf{stochastic-first maximizes acceptance by preserving the full tail-draft overlap}, whereas sequential RRSw compresses the overlap by the prefix mass $A$.

\end{document}